\documentclass{clear2025}

\usepackage{times}
\usepackage{booktabs}
\usepackage{tikz}
\usepackage{xcolor}
\usepackage{float}
\usepackage{xstring}
\usepackage{graphicx}
\usepackage{adjustbox}
\usetikzlibrary{arrows,positioning}
\usepackage{caption}
\usepackage{framed}

\usepackage[most]{tcolorbox}

\title[Proxy-Guided Measurement Calibration]{Proxy-Guided Measurement Calibration}

\clearauthor{
 \Name{Saketh Vishnubhatla} \Email{svishnu6@asu.edu}\\
 \addr Arizona State University, Tempe, AZ
 \AND
 \Name{Shu Wan} \Email{swan16@asu.edu}\\
 \addr Arizona State University, Tempe, AZ
 \AND
 \Name{Andre Harrison} \Email{andre.v.harrison2.civ@army.mil}\\
 \addr DEVCOM Army Research Lab, Adelphi, MD
 \AND
\Name{Adrienne Raglin} \Email{adrienne.raglin2.civ@army.mil}\\
 \addr DEVCOM Army Research Lab, Adelphi, MD
 \AND
 \Name{Huan Liu} \Email{huanliu@asu.edu}\\
 \addr Arizona State University, Tempe, AZ
}

\jmlrproceedings{}{}    
\jmlrvolume{}           
\jmlryear{}             
\jmlrworkshop{}         

\begin{document}

\maketitle
\thispagestyle{plain}

\begin{abstract}
Aggregate outcome variables collected through surveys and administrative records are often subject to systematic measurement error. For instance, in disaster loss databases, county-level losses reported may differ from the true damages due to variations in on-the-ground data collection capacity, reporting practices, and event characteristics. Such miscalibration complicates downstream analysis and decision-making. We study the problem of outcome miscalibration and propose a framework guided by proxy variables for estimating and correcting the systematic errors. We model the data-generating process using a causal graph that separates latent content variables driving the true outcome from the latent bias variables that induce systematic errors. The key insight is that proxy variables that depend on the true outcome but are independent of the bias mechanism provide identifying information for quantifying the bias. Leveraging this structure, we introduce a two-stage approach that utilizes variational autoencoders to disentangle content and bias latents, enabling us to estimate the effect of bias on the outcome of interest. We analyze the assumptions underlying our approach and evaluate it on synthetic data, semi-synthetic datasets derived from randomized trials, and a real-world case study of disaster loss reporting. Our code~\footnote{https://github.com/sak-18/proxy-guided-calibration} will be publicly available.

\end{abstract}



\section{Introduction}

Measurements obtained in empirical studies or administrative processes often deviate from the true outcome of interest due to systematic measurement errors \citep{imai2010causal}. These errors arise due to various factors, such as group-specific practices in data collection or a lack of necessary infrastructure \citep{van2019causes}. In such settings, quantifying the effect of outcome miscalibration is crucial for adjusting the observed measurements. Common approaches in the presence of measurement bias include running sensitivity tests to assess the validity of the hypothesis or calibration strategies that rely on validation data for which the true outcome is observed \citep{vanderweele2019simple,guerdan2023counterfactual}. Sensitivity tests can help assess the robustness of the findings, although they do not correct the measurement directly. Having access to true outcomes for a validation subset is a valuable avenue for correcting measurement errors. However, in many real-world scenarios, this may be a strong assumption and often infeasible. In such settings, modeling measurement errors requires additional structure. 

A useful signal comes from \emph{proxy variables}: measurements that are correlated with the underlying outcome but not affected by the underlying systematic error mechanism itself. For example, disaster losses are collected on the ground through windshield surveys and on-site inspections, where losses are reported in terms of human lives impacted and property or crop losses in dollars. In this context, using sensor-based measurements, which are entirely independent of the loss data collection mechanism, can function as proxies. When such proxy information is available, it suggests a principled way to separate the latent ``content’’ driving the true outcome from the latent ``bias’’ driving the measurement errors. 

Building on this causal perspective, we introduce a proxy-guided measurement calibration framework that utilizes variational autoencoders to disentangle unbiased content latents from bias-specific latents. The learned representation allows us to estimate the magnitude of the bias effect. We specify the assumptions required for identification and, through our empirical analysis, demonstrate that our method reliably recovers the underlying latent factors and accurately estimates the bias effect across a range of settings.
\section{Related Work}

\subsection{Systematic Bias in Outcome Measurement}
Several empirical studies demonstrate that measurement processes in crowdsourced urban data introduce systematic biases. \citet{agostini2024bayesian} frame the study in the context of flood reporting across New York City census tracts. They demonstrate that resident-reported incidents exhibit systematic spatial variation in reporting intensity based on the socio-economic and demographic covariates. A Bayesian approach using an Ising model is used to correct for the underreporting, and to predict future event occurrence accurately.  Complementing this perspective, \citet{liu2024quantifying} provide direct empirical evidence of spatial disparities in resident crowdsourcing across New York and Chicago. They propose a method to identify reporting rates solely from the logs of requests filed, without access to ground-truth event incident rates. 

Reporting bias also arises in administrative records of public services, where observed usage reflects both demand and access.
\citet{liu2024identifying} study public library systems and demonstrate that circulation and usage statistics can obscure underlying disparities in service access. Similar concerns have long been raised in the context of disaster loss databases. \citet{gall2009losses} document six recurring fallacies in natural hazards loss databases, demonstrating that reported losses depend on thresholds for inclusion, spatial and temporal aggregation choices, and institutional accounting practices, leading to systematic bias across regions and events. Together, these studies underscore that observed outcomes in many applications are shaped by the reporting mechanisms, motivating approaches that explicitly account for systematic reporting biases when drawing conclusions from real-world data.

A related line of work addresses systematic errors from a machine learning perspective.  In machine learning, noisy or biased labels are often modeled via a label corruption process, such as a noise transition matrix, and learning is adjusted accordingly; recent results show that such noise models can be estimated even without idealized anchor points by leveraging high-confidence examples \citep{xia2019anchor}. 
In many applied settings, however, label noise reflects a deeper form of outcome mismeasurement, where observed labels serve as proxies for underlying quantities whose relationship to the true outcome varies systematically across contexts or subpopulations \citep{9729424}. This phenomenon has been formalized as causal label bias, highlighting how reliance on proxy outcomes can mask disparities and lead to misleading conclusions about performance or fairness when evaluated solely on observed labels \citep{10.1145/3630106.3658972}.  From a causal perspective, outcome mismeasurement can further invalidate standard counterfactual identification; recent work studies counterfactual prediction under outcome measurement error \citep{guerdan2023counterfactual}, derives partial identification bounds for causal effects under differential misreporting \citep{huang2022conditional}, and develops sensitivity analysis frameworks to assess how severe outcome mismeasurement would need to be to explain away observed effects \citep{vanderweele2019simple}. 

Together, these strands emphasize that outcome miscalibration is often systematic rather than random, motivating approaches that explicitly reason about and correct measurement bias rather than treating noisy labels as incidental corruption.

\subsection{Latent Variable Models and Identifiability}

Latent variable modeling has become a central tool for causal inference when key variables are unobserved. A prominent example is addressing unobserved confounding in observational studies: \citet{louizos2017causal} introduce CEVAE, one of the first works to integrate deep latent variable models with causal effect estimation by using proxy variables to infer latent confounders. Related work applies latent variable models to settings where the outcome of interest is observed only through multiple noisy measurements, showing that causal effects on the underlying outcome can be recovered via an optimally weighted combination of proxies \citep{fu2025causal}.

A central challenge for such models is identifiability. In deep generative models, this issue is closely linked to posterior collapse, where latent representations become uninformative. \citet{wang2021posterior} show that posterior collapse arises precisely when latent variables are non-identifiable, and propose identifiable VAE constructions that enforce recoverability through structural constraints. Complementary theoretical results establish identifiability for broad classes of VAE-based models under suitable assumptions, even without auxiliary supervision \citep{kivva2022identifiability}, connecting these developments to classical results in statistics and causal discovery where additional assumptions, such as non-Gaussianity, enable identification of latent factors and causal structure \citep{hyvarinen2024identifiability}. Together, this work informs how latent variable models can be designed to recover meaningful latent structure for causal analysis.

\section{Preliminaries}

\subsection{Notation}

Throughout the paper, lowercase letters (e.g., $k$) denote scalars, and lowercase
boldface letters (e.g., $\mathbf{v}$) denote vectors in $\mathbb{R}^d$. Uppercase
boldface letters (e.g., $\mathbf{W}$) denote matrices, and calligraphic letters
(e.g., $\mathcal{S}$) denote sets. Plain uppercase letters such as $E$ denote
random variables or nodes in a causal graph. Lowercase variables denote
realizations of the corresponding random variables. We write $P(\cdot)$ for probability
mass functions, $p(\cdot)$ for densities, $\mathbb{E}[\cdot]$ for expectations,
and $do(\cdot)$ for Pearl's intervention operator.

In our model, $E$ denotes observed environment covariates. The latent content
vector is written $\mathbf{z} \in \mathbb{R}^{d_z}$, and the latent bias vector
is written $\mathbf{a} \in \mathbb{R}^{d_a}$. The unobserved true outcome is
$Y_{\mathrm{true}}$, and the biased observed measurement is $Y_{\mathrm{obs}}$.
We observe a vector of $m$ proxy measurements,
\[
\mathbf{y}_{\mathrm{proxy}}
=
\big(y_{\mathrm{proxy}}^{(1)}, \ldots, y_{\mathrm{proxy}}^{(m)}\big)
\in \mathbb{R}^m,
\]
which depend on the latent content $\mathbf{z}$ but not influenced by the
bias mechanism $\mathbf{a}$. When convenient, we write $Z$ and
$A$ for the corresponding random variables and $\mathbf{z}$ and
$\mathbf{a}$ for their unobserved realized values and $\hat{z},\hat{a}$ for point estimates.

\subsection{Proxy-Guided Measurement Calibration}

We define \emph{proxy-guided measurement calibration} as the task of recovering the unobserved true outcome $Y_{\mathrm{true}}$ from a biased observed measurement $Y_{\mathrm{obs}}$ by leveraging auxiliary proxy variables that are not causally influenced by the bias mechanism. In many real-world settings, systematic reporting errors cause $Y_{\mathrm{obs}}$ to deviate from $Y_{\mathrm{true}}$. Such errors arise from latent bias factors $\mathbf{a}$ that may vary across
environments indexed by $E$, while the underlying physical or domain-specific
signal is governed by latent content factors $\mathbf{z}$.

Proxy-guided calibration formalizes this setup through a causal model in
which the proxies serve as ``clean'' measurements of $\mathbf{z}$, enabling the
disentanglement of the content factors $\mathbf{z}$ from the bias factors
$\mathbf{a}$. This separation allows us to conceptually define the de-biased counterfactual
outcome $Y_{\mathrm{obs}}(0),$ which corresponds to the true outcome $Y_{true}$ in the model. Recovering $Y_{\mathrm{obs}}(0)$ from observational data would require assumptions we detail in the following sections.

\section{Proxy-Guidance for Measurement Calibration}

Our framework utilizes proxy measurements to separate true latent factors from the bias mechanism. We present the conditions under which we can identify the unobserved true outcome, describe how they are recovered via a two-stage VAE co-training setup, and demonstrate how the learned representations help quantify the effect of the bias.

\subsection{Generative Model}

We assume each observation is generated according to the causal graph in
Figure~\ref{fig:pgmc_dag_combined}. We observe an environment variable
$E=(E_1,\dots,E_{d_e})$, which determines the latent factors. The latent content and bias variables are denoted by $Z=(Z_1,\dots, Z_{d_z})$ and $A$. The unobserved true outcome is $Y_{\mathrm{true}}$, the biased observed outcome is $Y_{\mathrm{obs}}$, and the proxy measurements are collected in the vector-valued random variable $Y_{\mathrm{proxy}}$. The generative process is defined as below:
\[
E \sim p(E),
\qquad
Z \sim p(Z \mid E),
\qquad
A \sim p(A \mid E),
\]
\[
Y_{\mathrm{true}} \sim p(Y_{\mathrm{true}} \mid Z),
\qquad
Y_{\mathrm{proxy}} \sim p(Y_{\mathrm{proxy}} \mid Z),
\]
\[
Y_{\mathrm{obs}} \sim p(Y_{\mathrm{obs}} \mid Z, A).
\]
The joint distribution factorizes as:
\[
p(E,Z,A,Y_{\mathrm{true}},Y_{\mathrm{proxy}},Y_{\mathrm{obs}})
=
p(E)\,
p(Z\mid E)\,
p(A\mid E)\,
p(Y_{\mathrm{true}}\mid Z)\,
p(Y_{\mathrm{proxy}}\mid Z)\,
p(Y_{\mathrm{obs}}\mid Z,A),
\]
which encodes that the proxy measurements depend only on the latent content
variable $Z$, while the observed outcome depends on both content and bias.

\subsection{Identifiability}

We study the problem of identifying the effect of reporting bias on the observed
outcome. For the purposes of identification and estimation in this work, we focus on the
scalar binary case $d_a = 1$, where $A \in \{0,1\}$ indicates the absence or
presence of reporting bias. Our goal is to recover
the expected observed outcome under removal of reporting bias. For any observation with environment value $e$ and latent content value $z$, our causal estimand of interest is:
\[
\mu(e,z)
:= 
\mathbb{E}\!\left[\,Y_{\mathrm{obs}} \mid do(A=0),\,E=e,\,Z=z\,\right],
\]
which represents the bias-free conditional mean of the observed outcome. Because our representation learning procedure learns the latent variables $(Z,A)$ up to trivial transformations, we treat $(E,Z,A)$ as observed for the identification argument. The DAG in Figure~\ref{fig:pgmc_dag_combined} implies two structural conditions: (i) all parents of $A$ are contained in $(E,Z)$, and (ii) $E$ has no parents. These conditions ensure that $(E,Z)$ is a valid adjustment set for the causal effect of $A$ on $Y_{\mathrm{obs}}$.

\begin{proposition}[Identification]
For any $(e,z)$ in the support of $(E,Z)$,
\[
\mu(e,z)
:=
\mathbb{E}\!\left[\,Y_{\mathrm{obs}} \mid do(A=0),\,E=e,\,Z=z\,\right]
=
\mathbb{E}\!\left[\,Y_{\mathrm{obs}} \mid A=0,\,E=e,\,Z=z\,\right].
\]
\end{proposition}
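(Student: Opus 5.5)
The plan is to work directly with the structural causal model behind the factorization of $p(E,Z,A,Y_{\mathrm{true}},Y_{\mathrm{proxy}},Y_{\mathrm{obs}})$ and compute the interventional distribution by the truncated factorization (g-formula). Under $do(A=0)$, the factor $p(A\mid E)$ is replaced by the point mass $\mathbb{1}\{A=0\}$, and all other factors are unchanged. This gives
\[
p_{do(A=0)}(E,Z,Y_{\mathrm{true}},Y_{\mathrm{proxy}},Y_{\mathrm{obs}})
=
p(E)\,p(Z\mid E)\,p(Y_{\mathrm{true}}\mid Z)\,p(Y_{\mathrm{proxy}}\mid Z)\,p(Y_{\mathrm{obs}}\mid Z,A=0).
\]
Marginalizing out $Y_{\mathrm{true}}$ and $Y_{\mathrm{proxy}}$ and conditioning on $E=e,Z=z$, the $p(E)p(Z\mid E)$ factors cancel. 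This step needs $p(e,z)>0$, which is exactly the support assumption in the statement. It yields $p_{do(A=0)}(Y_{\mathrm{obs}}\mid e,z)=p(Y_{\mathrm{obs}}\mid Z=z,A=0)$, the structural conditional.

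Second, I will show that this structural conditional coincides with the observational $p(Y_{\mathrm{obs}}\mid A=0,E=e,Z=z)$. From the observational factorization, compute $p(y_{\mathrm{obs}}\mid a,e,z)=p(y_{\mathrm{obs}},a,e,z)/p(a,e,z)$. The numerator is $p(e)p(z\mid e)p(a\mid e)p(y_{\mathrm{obs}}\mid z,a)$ and the denominator is $p(e)p(z\mid e)p(a\mid e)$, so the ratio equals $p(y_{\mathrm{obs}}\mid z,a)$. Equivalently, $(E,Z)$ satisfies the back-door criterion relative to $(A,Y_{\mathrm{obs}})$: every back-door path from $A$ passes through $E$, which is in the set, and no element of $(E,Z)$ is a descendant of $A$. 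This is structural conditions (i)--(ii) stated before the proposition. Taking expectations of both conditionals then gives $\mu(e,z)=\mathbb{E}[Y_{\mathrm{obs}}\mid A=0,E=e,Z=z]$.

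The only real obstacle is positivity. The observational conditional requires $P(A=0\mid E=e)>0$, which is not strictly implied by $(e,z)$ lying in the support of $(E,Z)$. I would handle this by stating overlap, $P(A=0\mid E=e,Z=z)>0$, as an implicit assumption needed for the right-hand side to be defined. I would also note that $A\perp Z\mid E$ in the DAG, so this reduces to $P(A=0\mid E=e)>0$. With that in place, the rest is routine bookkeeping with the factorization. For discrete variables sums replace integrals, and the argument is identical.
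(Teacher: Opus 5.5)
Your argument is correct, but it proceeds differently from the paper's. The paper's sketch stays at the level of the graph. It notes that $(E,Z)$ contains every parent of $A$ and no descendant of $A$, so it is back-door admissible. It then cites the back-door criterion, as a case of Rule~2 of do-calculus, to equate $p(y\mid do(A=0),e,z)$ with $p(y\mid A=0,e,z)$.

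You instead verify the equality by hand from the Markov factorization. The truncated factorization reduces the interventional conditional to the structural kernel $p(y_{\mathrm{obs}}\mid Z=z,A=0)$. The ratio $p(y_{\mathrm{obs}},a,e,z)/p(a,e,z)$ reduces the observational conditional to the same kernel. The back-door criterion enters only as a remark.

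Your version buys three things. It is self-contained, with no appeal to do-calculus. It shows explicitly what the identified object is: a kernel that, under the stated factorization, does not even depend on $e$. It also surfaces the positivity requirement $P(A=0\mid E=e)>0$. The proposition and the paper's sketch leave this implicit, even though the right-hand side is undefined without it; the paper only lists overlap separately as (A6) in Table~\ref{tab:assumptions}.

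The paper's graphical route buys insensitivity to the exact form of the factorization. It applies verbatim to the error model of Figure~\ref{fig:pgmc_dag_combined}(b) and the additive bias model. There $Y_{\mathrm{obs}}$ depends on $Z$ only through $Y_{\mathrm{true}}$, and $Y_{\mathrm{true}}=g(Z,E)+\varepsilon$ depends on $E$. The graphical argument still works because it uses only that $E$ is the sole parent of $A$ and that $E,Z$ are non-descendants of $A$. Your cancellation step would need a small adjustment in that setting: integrate out $Y_{\mathrm{true}}$ and use $Y_{\mathrm{true}}\perp A\mid (E,Z)$. With that change it still goes through.
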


\begin{proof}[Sketch]
By definition,
\[
\mu(e,z)
=
\int y \;
p\!\left(y \mid do(A=0),\,E=e,\,Z=z\right)\,dy.
\]
In the DAG of Figure~\ref{fig:pgmc_dag_combined}, all parents of $A$ are contained in
$(E,Z)$ and $E$ has no parents. Thus $(E,Z)$ is an admissible adjustment set
for the edge $A \to Y_{\mathrm{obs}}$: conditioning on $(E,Z)$ blocks all
backdoor paths from $A$ to $Y_{\mathrm{obs}}$ and contains no descendants of
$A$. By the backdoor criterion (a special case of Rule~2 of do-calculus),
this implies
\[
p\!\left(y \mid do(A=0),\,E=e,\,Z=z\right)
=
p\!\left(y \mid A=0,\,E=e,\,Z=z\right).
\]
Substituting into the integral and using the definition of conditional
expectation yields
\[
\mu(e,z)
=
\mathbb{E}\!\left[\,Y_{\mathrm{obs}} \mid A=0,\,E=e,\,Z=z\,\right],
\]
as claimed.
\end{proof}

\subsection{Latent Recovery with VAEs}

We use a two-stage variational autoencoder to learn the content latents $Z$ and
the bias latent $A$. We write $q_\phi(\cdot)$ for encoder distributions
(variational posteriors) and $p(\cdot)$ for generative factors (priors and
decoders). Throughout, $Z$ is a continuous latent and $A \in \{0,1\}$ is a
binary bias indicator.

\paragraph{Stage~1: Learning Content Latents from Proxies.}
In the first stage, we learn the content latent $Z$ using only the proxy
measurements $Y_{\mathrm{proxy}}$ and the environment $E$. We specify an encoder
$q_{\phi_Z}(z \mid y_{\mathrm{proxy}}, E)$, an environment-conditioned prior
$p(z \mid E)$, and a decoder $p(y_{\mathrm{proxy}} \mid z)$. The resulting
evidence lower bound (ELBO) is
\[
\mathcal{L}_Z
=
\mathbb{E}_{q_{\phi_Z}}\!\bigl[\log p(y_{\mathrm{proxy}} \mid z)\bigr]
-
\mathrm{KL}\!\left(
  q_{\phi_Z}(z \mid y_{\mathrm{proxy}}, E)
  \,\Vert\,
  p(z \mid E)
\right).
\]
Because neither the observed outcome $Y_{\mathrm{obs}}$ nor the bias latent $A$
appears in this stage, and because the proxies are assumed not to depend on the
bias mechanism, the learned representation $Z$ captures variation associated
with the underlying content rather than reporting bias. Let $\hat z$ denote a point estimate of $Z$ obtained from encoder for an observation. In the second stage, $\hat z$ is treated as fixed.

\paragraph{Stage~2: Learning the Bias Latent.}
In the second stage, we infer the bias latent $A$ from the observed outcome
$Y_{\mathrm{obs}}$, conditional on the frozen content estimate $\hat z$ and the
environment $E$. We define an encoder
$q_{\phi_A}(a \mid Y_{\mathrm{obs}}, E, \hat z)$, an environment-conditioned prior
$p(a \mid E)$, and a decoder $p(Y_{\mathrm{obs}} \mid \hat z, a)$. The
corresponding ELBO is
\[
\mathcal{L}_A
=
\mathbb{E}_{q_{\phi_A}}\!\bigl[\log p(Y_{\mathrm{obs}} \mid \hat z, a)\bigr]
-
\mathrm{KL}\!\left(
  q_{\phi_A}(a \mid Y_{\mathrm{obs}}, E, \hat z)
  \,\Vert\,
  p(a \mid E)
\right).
\]
This stage attributes systematic errors in the observed outcome, relative to
the proxy-informed content representation, to the bias latent $A$, in
accordance with the assumed generative structure.

\paragraph{Combined encoder.}
Together, the two stages define an encoder that produces pointwise estimates of
the content and bias latents,
\[
(\hat z, \hat a)
=
\Bigl(
  \hat z,\;
  \hat a(Y_{\mathrm{obs}}, E, \hat z)
\Bigr),
\]
where $\hat z$ is obtained from the Stage~1 encoder and $\hat a$ from the
Stage~2 encoder. These latent estimates are subsequently used in a
post-hoc calibration step to quantify the effect of reporting bias.

\subsection{Bias Model and Estimation}

\begin{figure}[t]
\centering

\begin{minipage}[t]{0.62\linewidth}
\centering
\begin{tikzpicture}[
    >=stealth,
    n/.style={
        circle, draw,
        minimum size=10mm,
        inner sep=1pt,
        font=\scriptsize,
        text height=1.6ex,
        text depth=.25ex
    },
    obs/.style={n, fill=gray!20},
    latent/.style={n, fill=white},
    ell/.style={draw=none, fill=none, font=\scriptsize}
]

\node[obs]   (E)    at (0, 0.0)   {$E$};

\node[latent](Z)    at (-1.8,-2.0) {$Z$};
\node[latent](A)    at ( 1.8,-2.0) {$A$};

\node[obs]   (Yobs) at ( 1.8,-4.0) {$Y_{\mathrm{obs}}$};

\node[obs] (Yp1) at (-3.4,-4.9) {$Y_{\mathrm{pr}}^{(1)}$};
\node[ell] (Ypd) at (-1.8,-4.9) {\Large $\cdots$};
\node[obs] (Ypm) at (-0.2,-4.9) {$Y_{\mathrm{pr}}^{(m)}$};

\draw[->] (E) -- (Z);
\draw[->] (E) -- (A);

\draw[->] (Z) -- (Yp1);
\draw[->] (Z) -- (Ypm);

\draw[->] (Z) -- (Yobs);
\draw[->] (A) -- (Yobs);

\end{tikzpicture}

\vspace{0.4em}
\small\text{(a) Full generative model}
\end{minipage}
\hfill
\begin{minipage}[t]{0.34\linewidth}
\centering
\begin{tikzpicture}[
    >=stealth,
    obs/.style={circle, draw, minimum size=10mm, fill=gray!20, font=\scriptsize,
        text height=1.6ex, text depth=.25ex},
    latent/.style={circle, draw, minimum size=10mm, fill=white, font=\scriptsize,
        text height=1.6ex, text depth=.25ex}
]

\node[obs]    (E)     at (0,0)      {$E$};
\node[latent] (Z)     at (0,-2.3)   {$Z$};
\node[latent] (A)     at (3.3,-2.3) {$A$};

\node[latent] (Ytrue) at (0,-4.7)   {$Y_{\mathrm{true}}$};
\node[obs]    (Yobs)  at (3.3,-4.7) {$Y_{\mathrm{obs}}$};

\draw[->] (E) -- (Z);
\draw[->] (Z) -- (Ytrue);
\draw[->] (Ytrue) -- (Yobs);

\draw[->] (E) -- 
  node[pos=0.72, above, yshift=2pt, fill=white, inner sep=1pt, font=\scriptsize]
  {$A \in \{0,1\}$} (A);

\draw[->] (A) -- 
  node[midway, above, fill=white, inner sep=1pt, font=\scriptsize]
  {$+\;\alpha$} (Yobs);

\end{tikzpicture}

\vspace{0.4em}
\small\text{(b) Error model}
\end{minipage}

\caption{
\textbf{Causal structure for proxy-guided measurement calibration.}
(a) Full generative model: environment variables $E$ influence latent content factors $Z$
and latent reporting bias $A$. Proxy measurements $\{Y_1,\ldots,Y_m\}$ depend only on $Z$,
while the observed outcome $Y_{\mathrm{obs}}$ depends on both $Z$ and $A$.
(b) Error model highlighting the measurement bias mechanism: the true
outcome $Y_{\mathrm{true}}$ is perturbed by an environment-dependent binary bias $A$ with
additive magnitude $\alpha$.
}
\label{fig:pgmc_dag_combined}
\end{figure}
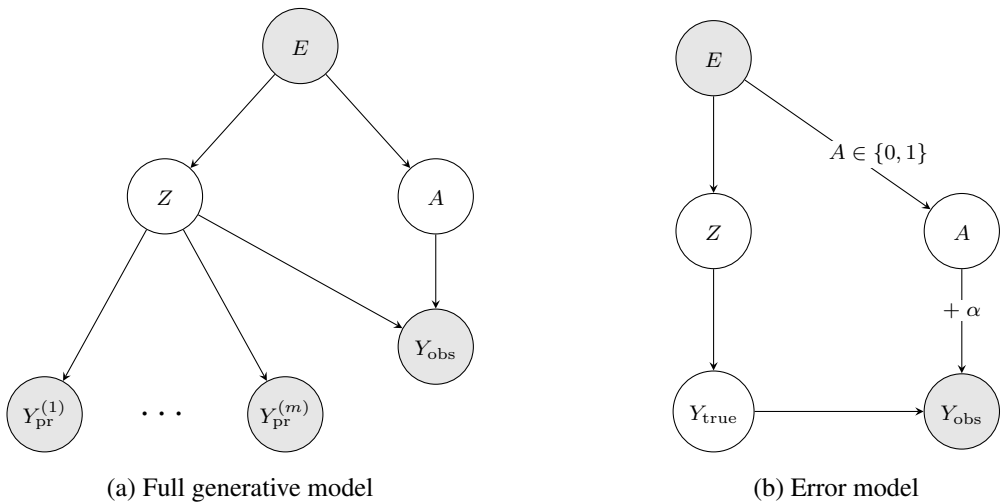

After recovering latent representations for content and bias, we focus on
estimating the magnitude of the reporting bias. We adopt a simple additive
bias model, illustrated in Figure~\ref{fig:pgmc_dag_combined}, in which the
observed outcome differs systematically from the true outcome when reporting
bias is present. The true outcome is generated as a function of the latent content and
environment,
\[
Y_{\mathrm{true}} = g(Z,E) + \varepsilon,
\qquad
\mathbb{E}[\varepsilon \mid Z,E] = 0,
\]
and the observed outcome incorporates a bias-induced shift,
\[
Y_{\mathrm{obs}} = Y_{\mathrm{true}} + \alpha\,A,
\qquad
A \in \{0,1\}.
\]
When $A=0$, the observed measurement is unbiased i.e. $Y_{\mathrm{obs}} = Y_{\mathrm{true}}$, while when $A=1$ the outcome is shifted by $\alpha$ on average. The scalar parameter $\alpha$ therefore quantifies the magnitude of reporting bias.

The two-stage VAE yields pointwise latent representations $(\hat Z,\hat A)$ for
each unit, where $\hat A$ is a real-valued latent score associated with the
binary bias indicator $A \in \{0,1\}$. Because latent representations are
identifiable only up to simple reparameterizations, the scale and orientation of
$\hat A$ are arbitrary. Consequently, numerical values of $\hat A$ (e.g.,
$0.9$ versus $0.1$) need not correspond in any fixed way to the presence or
absence of bias, and the mapping between $\hat A$ and $A$ may be reversed or
rescaled without affecting the implied data distribution.

To estimate the bias parameter $\alpha$, we construct a matched comparison
between units inferred to be in the biased regime and units inferred to be in
the unbiased regime. Let $\mathcal{I}_{1}$ denote the set of units with high
values of the latent bias score $\hat A$ and $\mathcal{I}_{0}$ the set of units
with low values of $\hat A$. For each unit $i \in \mathcal{I}_{1}$, we select a
set $\mathcal{N}_K(i) \subset \mathcal{I}_{0}$ consisting of its $K$ nearest
neighbors in the recovered content space $\hat Z$. The bias magnitude is estimated as
\[
\hat\alpha
=
\frac{1}{|\mathcal{I}_{1}|}
\sum_{i \in \mathcal{I}_{1}}
\left(
  Y_{\mathrm{obs},i}
  -
  \frac{1}{K}
  \sum_{j \in \mathcal{N}_K(i)} Y_{\mathrm{obs},j}
\right).
\]

This estimator contrasts observed outcomes for biased units with the average
outcomes of matched unbiased units that share similar latent content, thereby
removing variation attributable to the outcome function $g(Z,E)$. Under the additive bias model, and assuming that matching on $\hat Z$ (and on $E$
when included) renders the conditional mean of $g(Z,E)$ identical across matched
units, the estimator $\hat\alpha$ consistently recovers the average reporting
bias parameter $\alpha$.

\section{Experiments}

Evaluating outcome miscalibration in real-world settings is challenging because the true data-generating mechanisms are unknown and calibration errors are rarely documented. We therefore begin with a controlled synthetic setting, where the full latent structure is observable, and then validate our approach on semi-synthetic datasets built from randomized trials. Finally, we provide a real-world case study motivated by biases in recording natural disaster losses.

\begin{table}[!htbp]
\centering
\caption{Assumptions across experimental settings.
$\checkmark$ indicates the assumption is satisfied,
$\times$ indicates it is not assumed,
and $\circ$ indicates partial or indirect availability.}
\label{tab:assumptions}
\small
\begin{tabular}{l c c c}
\toprule
 & Synthetic & Semi-synthetic & Real-world \\
\midrule
(A1) Proxy exclusion
  & $\checkmark$ & $\checkmark$ & $\checkmark$ \\
(A2) Bias model: constant $\alpha$
  & $\checkmark$ & $\checkmark$ & $\times$ \\
(A3) Sign restriction: $\alpha \ge 0$
  & $\checkmark$ & $\checkmark$ & $\times$ \\
(A4) Linear structural mechanisms
  & $\checkmark$ & $\times$ & $\times$ \\
(A5) Availability of ground-truth $(A,Z)$
  & $\checkmark$ & $\circ$ & $\times$ \\
(A6) Overlap in $A$
  & $\checkmark$ & $\checkmark$ & $\circ$ \\
\bottomrule
\end{tabular}
\end{table}

\subsection{Experiments on Synthetic Data}

We first assess our framework under different synthetic data-generating processes (DGPs) that vary in latent dimensionality, sample size, and noise type. This allows us to examine conditions such as functional form, proxy informativeness, and bias strength that cannot be controlled in real-world observational data. Synthetic data is sampled from the following DGP:

\begin{align*}
\mathbf{e} &\sim \mathcal{N}(\mathbf{0}, \mathbf{I}_{d_e}),
\qquad \mathbf{e} \in \mathbb{R}^{d_e},
\\[6pt]
\mathbf{z} &= \mathbf{W}_z^\top \mathbf{e} + \boldsymbol{\varepsilon}_z,
\qquad \mathbf{W}_z \in \mathbb{R}^{d_e \times d_z},\;
\boldsymbol{\varepsilon}_z \sim \mathcal{N}(\mathbf{0}, \mathbf{I}_{d_z}),
\\[6pt]
A &= \mathbf{1}\!\left\{
\sigma\!\left(\mathbf{w}_a^\top \mathbf{e} + \varepsilon_a\right) > \tfrac12
\right\},
\qquad \mathbf{w}_a \in \mathbb{R}^{d_e},
\\[6pt]
Y_{\mathrm{true}} &= \mathbf{w}_y^\top \mathbf{z} + \varepsilon_y,
\qquad \mathbf{w}_y \in \mathbb{R}^{d_z},
\\[6pt]
Y_{\mathrm{obs}} &= Y_{\mathrm{true}} + \alpha A + \varepsilon_{\mathrm{obs}},
\qquad \alpha \in \mathbb{R},
\\[6pt]
Y_{\mathrm{proxy}}^{(k)} &= c_k\,Y_{\mathrm{true}} + \varepsilon_{\mathrm{proxy}}^{(k)},
\qquad k=1,\dots,m.
\end{align*}

In this synthetic setup, environment covariates
$\mathbf{e} = (E_1,\dots,E_{d_e})$ jointly induce latent content variables
$\mathbf{z} = (Z_1,\dots,Z_{d_z})$ and a scalar bias indicator $A$ through
linear functions of $\mathbf{e}$. The true outcome $Y_{\mathrm{true}}$ depends
only on the content latents, while the observed measurement $Y_{\mathrm{obs}}$
incorporates an additive bias of magnitude $\alpha$ whenever the bias
mechanism is active. Proxy measurements
$Y_{\mathrm{proxy}}^{(1)},\dots,Y_{\mathrm{proxy}}^{(m)}$ are generated as
independent noisy linear scalings of $Y_{\mathrm{true}}$ and are assumed to be
unaffected by the bias variable $A$, enforcing a strict exclusion restriction. This DGP makes several strong structural assumptions. In particular, both the
content and bias mechanisms are linear in the environment, the bias enters the
outcome additively, and all proxy variables are conditionally independent given
$Y_{\mathrm{true}}$ and depend on it only through fixed linear coefficients.

\paragraph{Setup.} 
Unless stated otherwise, we use $d_e = 10$ environment variables. The dataset is split into train/validation/test in an 80/10/10 ratio. Both VAE stages use Adam with learning rate $10^{-3}$. Following latent recovery, we estimate treatment effects via a matching estimator between treated units (high~$A$) and controls (low~$A$) stratified by~$Z$, where the threshold is chosen on the validation set. Evaluation focuses on estimating $\alpha$ and recovering $(Z,A)$ upto to permutation and scale.


\paragraph{Results.}
Table~\ref{tab:main-synth-alpha-estimation} reports $\hat{\alpha}$ across sample sizes, latent dimensions, and noise models. Across all settings, our method recovers $\alpha$ accurately, with performance improving as sample size grows. Bias magnitudes of $\alpha=5$ show the clearest trend with sample size, while extreme values ($\alpha=1$ and $\alpha=10$) exhibit broader variance. Gaussian and Poisson noise exhibit nearly identical behavior, indicating that the type of noise has no effect.

\begin{table}[!htbp]
\centering
\caption{Bias estimation with synthetic data under different settings.}
\label{tab:main-synth-alpha-estimation}
\small
\resizebox{0.8\textwidth}{!}{
\begin{tabular}{ccccccccc}
\toprule
\multicolumn{3}{c}{} & \multicolumn{6}{c}{Estimated $\alpha$} \\
\cmidrule(lr){4-9}
\multicolumn{3}{c}{} & \multicolumn{2}{c}{$\alpha=1$} & \multicolumn{2}{c}{$\alpha=5$} & \multicolumn{2}{c}{$\alpha=10$} \\
\cmidrule(lr){4-5}\cmidrule(lr){6-7}\cmidrule(lr){8-9}
$n$ & $d_z$ & $d_e$ & Gaussian & Poisson (scaled) & Gaussian & Poisson (scaled) & Gaussian & Poisson (scaled)  \\
\midrule
500 & 1 & 10 & 1.20±0.23 & 1.26±0.23 & 4.16±0.65 & 4.17±0.64 & 8.59±1.48 & 8.61±1.47 \\
500 & 2 & 10 & 1.07±0.28 & 1.06±0.36 & 3.88±0.51 & 3.63±0.48 & 8.23±1.47 & 7.82±1.65 \\
500 & 5 & 10 & 1.08±0.22 & 0.88±0.33 & 3.95±0.48 & 3.71±0.65 & 8.61±0.93 & 8.19±0.93 \\
500 & 10 & 10 & 1.01±0.28 & 1.06±0.15 & 3.81±1.37 & 4.03±1.07 & 7.58±2.31 & 8.16±1.56 \\
500 & 20 & 10 & 1.28±0.32 & 1.21±0.34 & 3.90±0.30 & 3.72±0.41 & 8.26±2.38 & 8.33±2.34 \\
\midrule
1000 & 1 & 10 & 1.18±0.21 & 1.16±0.22 & 4.34±0.34 & 4.37±0.50 & 9.02±0.97 & 8.93±1.21 \\
1000 & 2 & 10 & 0.89±0.12 & 0.86±0.13 & 3.94±0.47 & 4.03±0.50 & 9.23±0.97 & 8.89±1.07 \\
1000 & 5 & 10 & 0.76±0.17 & 0.76±0.16 & 4.03±0.41 & 3.95±0.37 & 8.86±1.22 & 8.69±1.01 \\
1000 & 10 & 10 & 1.04±0.16 & 1.05±0.15 & 4.41±0.47 & 4.31±0.47 & 8.04±1.78 & 7.91±1.48 \\
1000 & 20 & 10 & 1.29±0.17 & 1.28±0.18 & 3.64±0.25 & 3.66±0.30 & 8.34±1.20 & 8.36±1.20 \\
\midrule
2500 & 1 & 10 & 0.97±0.11 & 0.98±0.13 & 4.01±0.43 & 4.02±0.46 & 7.93±1.04 & 7.97±1.07 \\
2500 & 2 & 10 & 0.80±0.18 & 0.83±0.22 & 4.31±0.44 & 4.18±0.56 & 8.78±0.78 & 8.74±0.80 \\
2500 & 5 & 10 & 0.74±0.15 & 0.77±0.12 & 4.43±0.35 & 4.43±0.35 & 8.83±0.70 & 8.81±0.59 \\
2500 & 10 & 10 & 0.96±0.07 & 0.97±0.10 & 4.44±0.44 & 4.33±0.52 & 8.90±1.07 & 8.77±1.26 \\
2500 & 20 & 10 & 1.13±0.25 & 1.09±0.25 & 4.01±0.26 & 3.98±0.24 & 8.87±1.10 & 9.01±0.98 \\
\midrule
5000 & 1 & 10 & 1.03±0.16 & 1.02±0.14 & 4.34±0.50 & 4.24±0.55 & 8.46±1.31 & 8.33±1.43 \\
5000 & 2 & 10 & 0.84±0.16 & 0.84±0.13 & 4.28±0.54 & 4.32±0.51 & 8.81±1.28 & 8.81±1.18 \\
5000 & 5 & 10 & 0.74±0.09 & 0.75±0.07 & 4.50±0.69 & 4.50±0.68 & 9.08±1.35 & 8.99±1.38 \\
5000 & 10 & 10 & 0.95±0.06 & 0.93±0.06 & 4.54±0.39 & 4.41±0.42 & 9.04±0.86 & 8.83±0.79 \\
5000 & 20 & 10 & 0.98±0.15 & 1.02±0.18 & 4.19±0.45 & 4.20±0.42 & 8.14±1.26 & 8.20±1.16 \\
\midrule
10000 & 1 & 10 & 1.11±0.08 & 1.10±0.06 & 4.39±0.57 & 4.43±0.56 & 8.44±1.04 & 8.70±1.11 \\
10000 & 2 & 10 & 0.98±0.06 & 0.97±0.05 & 4.10±0.60 & 4.14±0.52 & 8.69±1.17 & 8.72±1.09 \\
10000 & 5 & 10 & 0.97±0.17 & 0.96±0.09 & 4.29±0.64 & 4.46±0.58 & 8.78±1.08 & 9.20±0.88 \\
10000 & 10 & 10 & 1.05±0.08 & 1.05±0.06 & 3.91±0.67 & 4.14±0.62 & 7.89±1.54 & 8.24±1.28 \\
10000 & 20 & 10 & 1.18±0.12 & 1.20±0.12 & 4.81±0.09 & 4.65±0.35 & 8.79±1.33 & 8.90±1.11 \\
\midrule
\bottomrule
\end{tabular}
}
\end{table}

\subsection{Experiments on Semi-Synthetic Data}

To evaluate performance in more realistic settings, we next construct semi-synthetic datasets from two randomized controlled trials. In these datasets, the environment variables and proxy measurements, and consequently the latents, arise naturally from the real study context, while reporting bias is introduced artificially. This setting allows us to study real-world latent mechanisms while having access to the induced bias level ~$\alpha$.

\paragraph{Data.}
We evaluate our method using two standard causal inference benchmarks. The Oregon Health Insurance Experiment (OHIE) is a randomized Medicaid lottery in which treatment corresponds to insurance enrollment. The outcome is log-transformed total hospital spending, and we use five biomedical and mental health indicators as proxy measurements that reflect underlying health status but are not subject to reporting bias. Environment variables capture pre-lottery utilization and hospitalization history. The JOBS dataset studies the impact of a job training program, with treatment defined by program participation. The outcome is 1978 earnings, while pre-treatment earnings from 1974 and 1975 serve as unbiased proxies. Environment covariates include standard demographic attributes such as age, education, race, and marital status. Both datasets provide clean treatment assignment with auxiliary information that can be used as proxies, making them well-suited for evaluating proxy-guided measurement calibration.



\paragraph{Experimental Setup.}
We compare our method against several baselines that do not explicitly leverage proxies, all evaluated under $k$-fold cross-validation
over 10 folds. In all settings, folds are constructed at the unit level, models are
trained exclusively on training folds, and all latent representations and
estimators are frozen before evaluation on held-out test folds. 

The \emph{proxy-only} baseline uses the proxy variables to learn a predictor
$\widehat{Y}$ of the observed outcome, treating proxies as unbiased measurements
of the latent signal. Bias is then estimated by contrasting residuals
$Y_{\mathrm{obs}}-\widehat{Y}$ across inferred treatment groups, attributing
systematic residual differences to reporting distortion. The \emph{environment-only} baseline models $Y_{\mathrm{obs}}$ directly as a
function of environmental covariates, attributing systematic variation across
treatment groups to environment-driven reporting effects. Differences predicted
by this model are used to estimate the bias magnitude $\hat{\alpha}$, implicitly
assuming that environmental variables capture the dominant sources of reporting
heterogeneity.


We also include TEDVAE \citet{zhang2021treatment}, a variational autoencoder designed to disentangle latent factors relevant for heterogeneous treatment effect estimation using proxies for unobserved confounders. In contrast, our method uses a two-stage VAE to disentangle content and bias
latents by combining proxy exclusion with environment-conditioned bias modeling
prior to estimating $\alpha$.

\paragraph{Results.}
Table~\ref{tab:main-synth-alpha-estimation} shows differences across
datasets. On OHIE, our proxy-guided model accurately recovers the true bias
magnitude $\alpha$ across all regimes and substantially outperforms all baselines,
with particularly strong performance for $\alpha=5$ and $\alpha=10$. On JOBS,
our method consistently improves over proxy-only, environment-only, and TEDVAE
baselines, but underestimates $\alpha$ in moderate and high bias settings,
reflecting the increased difficulty of calibration in this dataset. Proxy-only and
environment-only baselines substantially overestimate bias in both datasets,
while TEDVAE often yields near-zero estimates across regimes. TEDVAE optimizes a latent noise component to support treatment effect estimation rather than to identify or preserve the magnitude of systematic measurement bias, which leads to attenuated bias estimates in calibration-focused evaluations. Across both datasets, our performance is moderately stable with respect to the latent dimension $d_z$, indicating robustness to the choice of representation size.

\begin{table*}[t]
\centering
\caption{Estimated $\hat\alpha$ for JOBS and OHIE across baselines and latent dimensions.}
\small
\resizebox{\textwidth}{!}{
\begin{tabular}{l | c c c | c c c}
\toprule
 & \multicolumn{3}{c|}{\textbf{JOBS}} & \multicolumn{3}{c}{\textbf{OHIE}} \\
\cmidrule(lr){2-4} \cmidrule(lr){5-7}
 & $\alpha=1$ & $\alpha=5$ & $\alpha=10$ & $\alpha=1$ & $\alpha=5$ & $\alpha=10$ \\
\midrule
Baseline (Proxy only) & 4.718 ± 0.03 & 8.382 ± 0.04 & 13.382 ± 0.04 & 22.418 ± 0.05 & 24.073 ± 0.06 & 27.973 ± 0.07 \\
Baseline (Env only) & 3.526 ± 0.13 & 7.010 ± 0.26 & 11.979 ± 0.26 & 15.204 ± 0.55 & 15.200 ± 0.47 & 16.114 ± 0.29 \\
Baseline (TEDVAE) & 0.505 ± 0.32 & 1.304 ± 0.71 & 2.275 ± 1.37 & 0.225 ± 0.18 & 0.279 ± 0.18 & 0.769 ± 0.43 \\
Ours dim. (Z=5) & 1.694 ± 0.42 & 3.807 ± 1.14 & 7.580 ± 1.30 & 1.807 ± 0.37 & 4.415 ± 0.30 & 7.700 ± 0.36 \\
Ours dim. (Z=10) & 1.505 ± 0.35 & 3.529 ± 0.89 & 6.518 ± 1.63 & 1.941 ± 0.36 & 4.320 ± 0.18 & 7.548 ± 0.28 \\
Ours dim. (Z=15) & 1.498 ± 0.44 & 3.583 ± 0.71 & 6.773 ± 1.69 & 1.690 ± 0.37 & 4.311 ± 0.31 & 7.594 ± 0.50 \\
Ours dim. (Z=20) & 1.556 ± 0.46 & 3.296 ± 0.47 & 7.647 ± 1.85 & 1.863 ± 0.36 & 4.294 ± 0.14 & 7.832 ± 0.39 \\
\bottomrule
\end{tabular}}
\end{table*}


\subsection{Case Study on Real-World Data: SHELDUS Disaster Loss Data}

For a real-world case study, we use SHELDUS, a disaster loss database \citep{SHELDUS2025}. SHELDUS reports the county-level damages in terms of property and crop damages, along with injury and fatality counts for natural disasters.

\paragraph{Data.}
Our units of interest include counties in SHELDUS that have been affected by wildfires, hurricanes/tropical storms, flooding, and tornadoes. As our outcome variable of interest, we chose the loss for property damage in the county. The observed outcome $Y_{\mathrm{obs}}$ is log property damage for each
county--hazard--year. Remote sensing indicators, which provide information on land cover change from one class to another (e.g., area change from built-up to water indicative during floods), form a set of proxy variables. We limit the time period of interest from 2016 to 2023, as the remote sensing product we rely on, Dynamic World, is available only from 2015 \citep{brown2022dynamic}. We provide more information on the dataset in the appendix \ref{app:semi}.  Environment variables
$E$ includes demographic and socioeconomic attributes (e.g., population, income,
poverty rate, median age) and the category of the disaster.


\paragraph{Estimating effects of reporting bias.} Our framework enables counterfactual comparisons between counties that
are physically similar but differ in their degree of reporting bias.
Counties are partitioned into ``treated'' (high bias) and ``control'' (low bias)
groups using a threshold on the estimated latent ~$\hat{A}$. For each treated county~$i$, we estimate a counterfactual outcome by matching to
its $K$ nearest control counties in the latent space~$Z$. The resulting
conditional average treatment effect is estimated as
\[
\widehat{\tau}_i
=
\left|
Y_{\mathrm{obs},i}
-
\frac{1}{K}
\sum_{j \in \mathcal{N}_K(i)}
Y_{\mathrm{ctrl},j}
\right|.
\]
This quantity represents a local estimate of the conditional average treatment
effect evaluated at the county’s latent content representation $Z_i$, and
quantifies the magnitude of reporting bias while holding underlying physical
characteristics fixed.

\paragraph{Results.}
Figure~\ref{fig:county-cate-maps} reveals pronounced geographic heterogeneity in
reporting bias across counties. Counties with no qualifying events appear in
white. The county-level maps in Figure~\ref{fig:county-cate-maps} indicate that
hurricane-related reporting bias is concentrated along the coastal regions, particularly
in Florida, where we might see a direct landfall. In contrast, we do not see a clear cluster of biased counties for other events. However, patches of hotspots, such as California for wildfires and the Tornado Alley in the central U.S for tornadoes, appear to yield low estimates for bias. 

Figure~\ref{fig:hazard-bar} aggregates local CATE magnitudes by hazard type.
Floods exhibit the largest average magnitude of reporting bias, followed by
tornadoes, whereas wildfires and hurricanes show comparatively lower effects of reporting bias. This ordering is consistent with prior analyses based on SHELDUS and related loss databases, which document greater uncertainty in flood loss reporting relative to other events \citep{gall2009losses,zhou2025knowing}.

\begin{figure}[t]
  \centering

  \begin{minipage}[t]{0.5\linewidth}
    \centering
    \includegraphics[width=\linewidth]{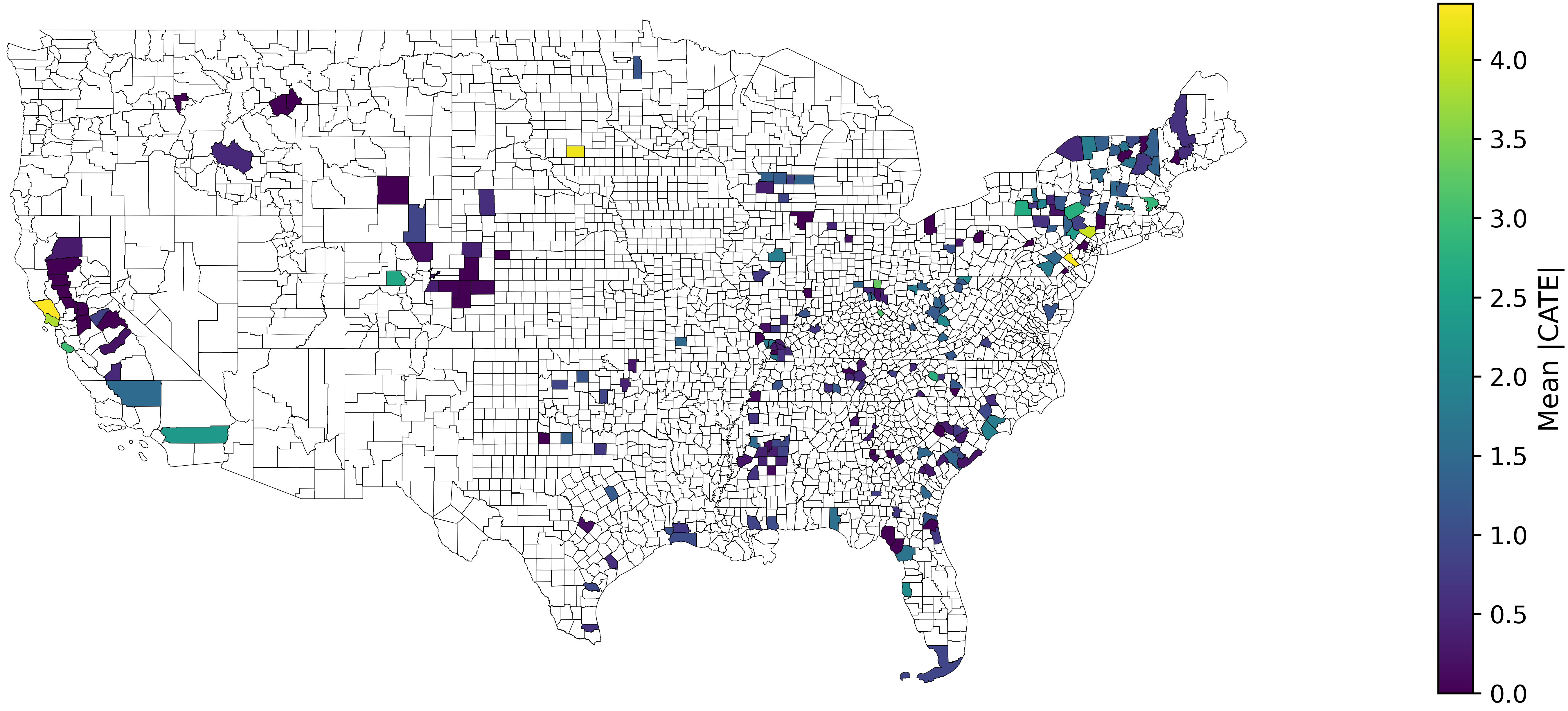}\\[-2pt]
    {\small Flood}
  \end{minipage}\hfill
  \begin{minipage}[t]{0.5\linewidth}
    \centering
    \includegraphics[width=\linewidth]{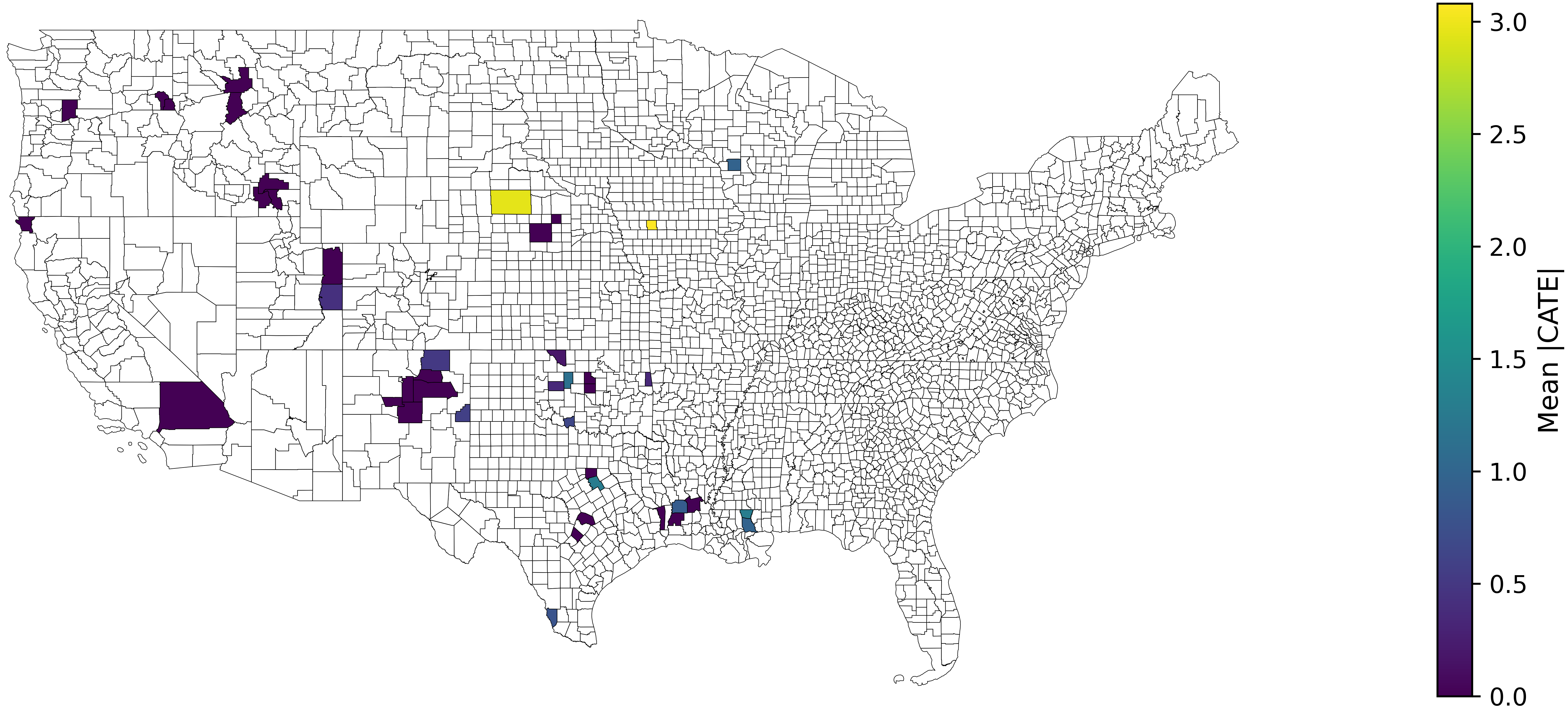}\\[-2pt]
    {\small Wildfire}
  \end{minipage}

  \vspace{6pt}

  \begin{minipage}[t]{0.5\linewidth}
    \centering
    \includegraphics[width=\linewidth]{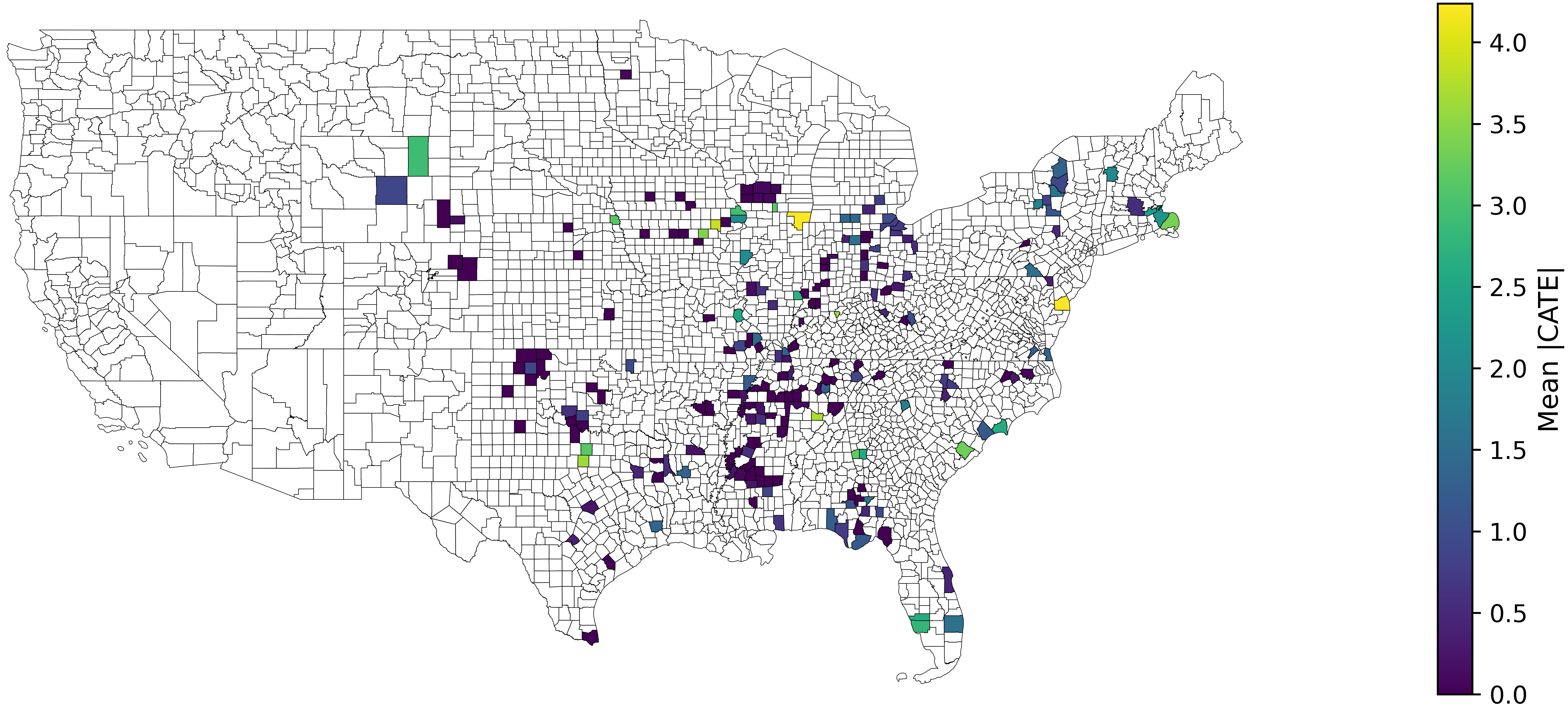}\\[-2pt]
    {\small Tornado}
  \end{minipage}\hfill
  \begin{minipage}[t]{0.5\linewidth}
    \centering
    \includegraphics[width=\linewidth]{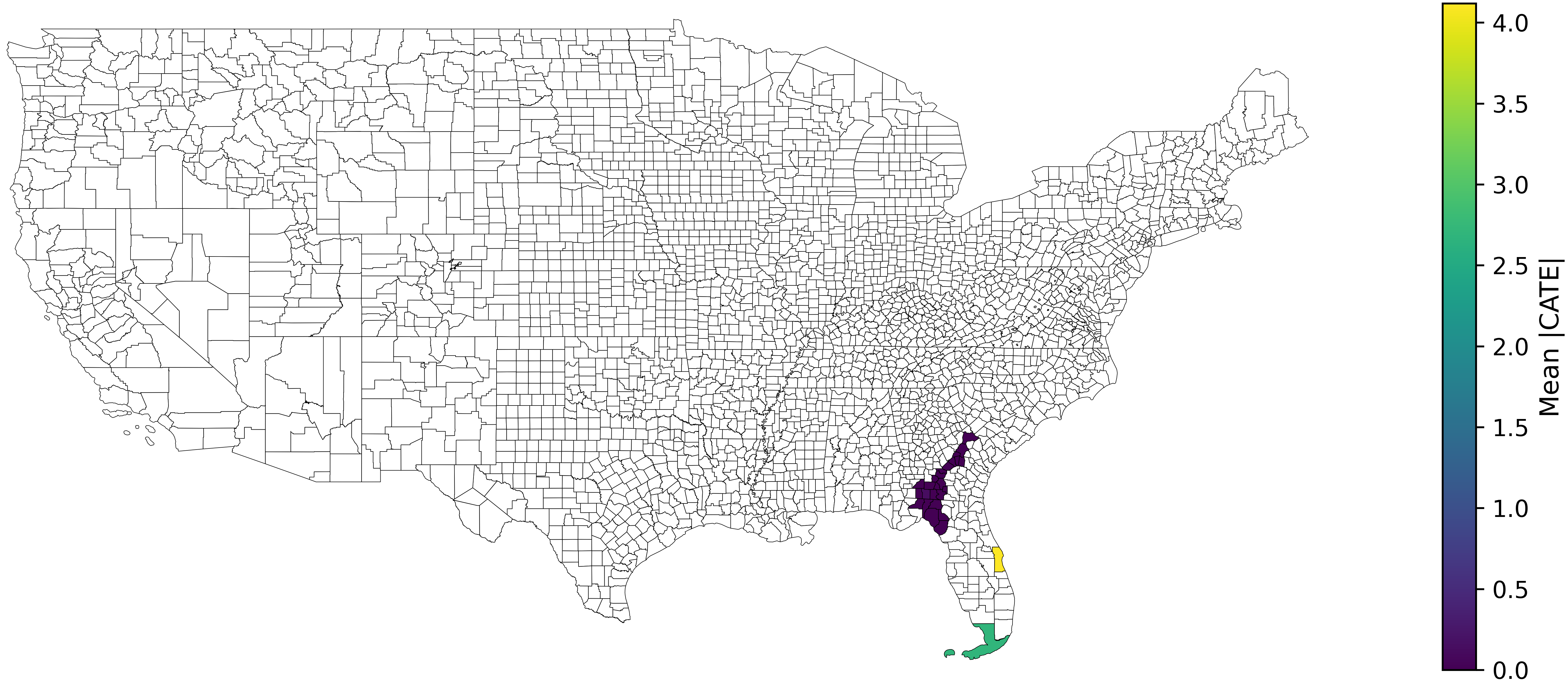}\\[-2pt]
    {\small Hurricane}
  \end{minipage}

  \caption{County-level mean absolute CATE estimates $|\widehat{\tau}_i|$ for 2023 across four hazard types. White indicates counties with no event; lighter colors represent higher estimated reporting bias.}
  \label{fig:county-cate-maps}
\end{figure}

\begin{figure}[t]
    \centering
    \includegraphics[width=0.8\linewidth]{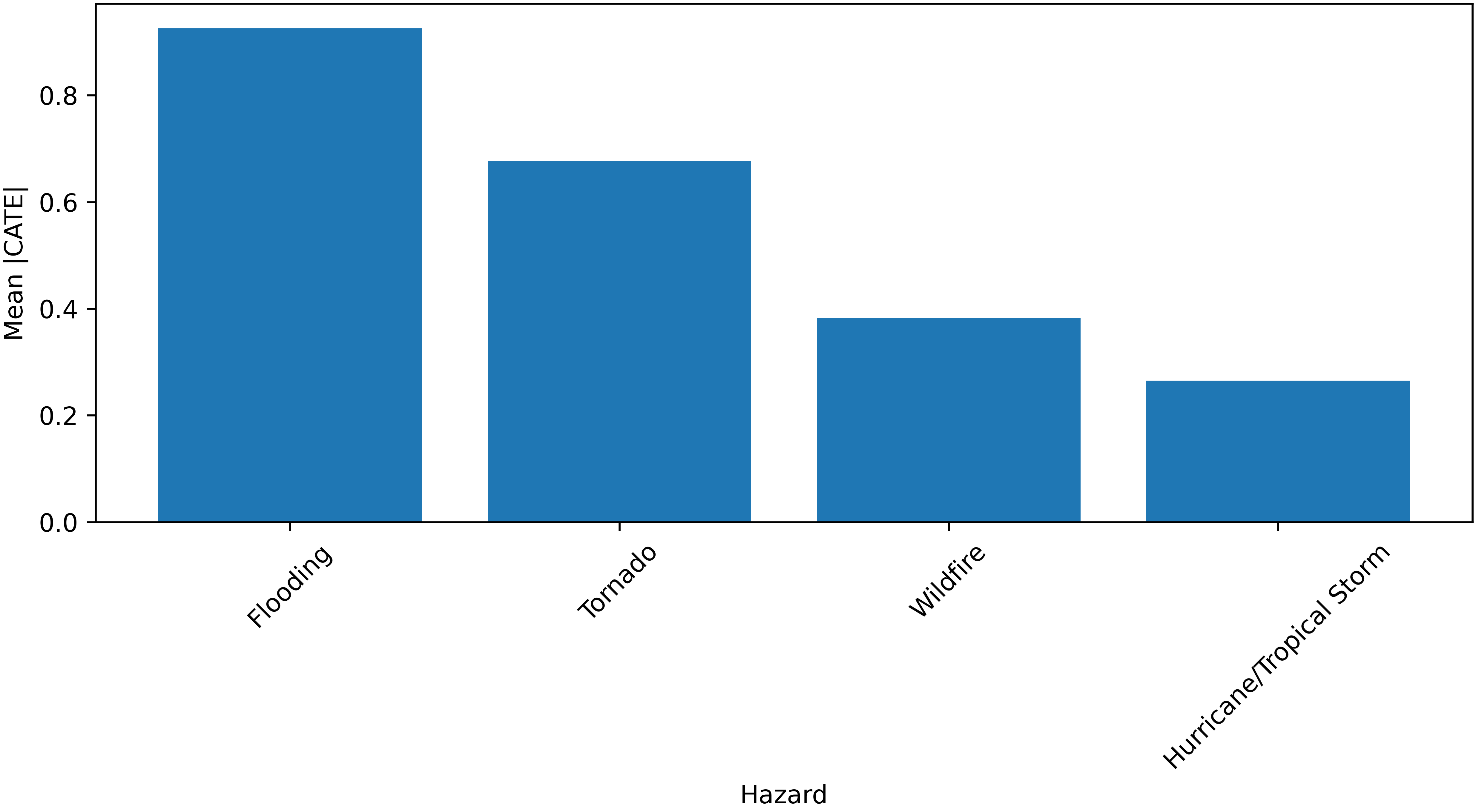}
    \caption{Mean absolute CATE values grouped by hazard type. Wildfire events
    exhibit the highest systematic reporting distortion, followed by tornadic
    and flooding events.}
    \label{fig:hazard-bar}
\end{figure}

\section{Discussion}

This work introduces proxy-guided measurement calibration, a framework for recovering true outcomes from systematically biased measurements. We formalize this problem through an explicit causal model in which measurement distortion is driven by latent bias factors that act on the observed outcome but are excluded from proxy variables. By separating the latent space into content-specific latents and bias-specific latents, where proxy variables depend only on the former, we provide a principled approach to identifying the magnitude of systematic miscalibration. Across different synthetic and semi-synthetic settings, we empirically show the performance of our framework, followed by a real-world case study.

Our semi-parametric approach, which combines co-trained latent representations with a matching-based non-parametric estimator, demonstrates a decent performance across a range of settings. Although the learned latent variables are identifiable only up to scale and permutation, this indeterminacy does not affect estimation because matching is performed exclusively in the latent space, while outcomes are compared on their original scale. As a result, the causal estimand remains invariant to latent rescaling or rotation. In contrast to parametric methods that propagate latent rescaling into outcome predictions, our estimator avoids outcome rescaling issues, leading to improved empirical performance.

Several limitations provide an avenue for future work. Firstly, the assumed error model remains restrictive. Although the real-world application relaxes several assumptions imposed in the synthetic and semi-synthetic settings, it still relies on the monotonicity assumption. Relaxing this assumption for identifying the magnitude of the bias is an important direction for future work. Moreover, our framework identifies conditional average treatment effects at the unit level, rather than individual treatment effects, reflecting fundamental limits imposed by the available data and assumptions. Our work finds many use cases where we can easily obtain proxy variables to correct for the data-generating bias. Extending this framework to other domains where outcomes are systematically mismeasured, such as public health surveillance, administrative records, and environmental monitoring, remains an important direction for future research.

\section{Acknowledgements}

This material is based upon work supported by, or in part by the U.S. Army Materiel Command under Grant Award Number W911NF24-2-0175 and by the U.S. Army Research Laboratory under Grant Award Number W911NF2020124. The views and conclusions contained in this document are those of the authors and should not be interpreted as representing the official policies of the U.S. Army Materiel Command or the U.S. Army Research Laboratory.

\bibliography{references}

\appendix
\clearpage

\makeatletter
\global\let\old@starttoc\@starttoc
\def\@starttoc#1{\ifx#1toc\global\let\@starttoc\old@starttoc\def\contentsline##1##2##3{}\fi\old@starttoc{#1}}
\makeatother

\begingroup
\let\oldaddcontentsline\addcontentsline
\renewcommand{\addcontentsline}[3]{%
  \ifx#1\toc
    \oldaddcontentsline{toc}{##2}{##3}%
  \fi
}

\tableofcontents
\endgroup
\clearpage

\section{Additional Experimental Discussion}
\label{app:experiments}

\subsection{Experiments on Synthetic Data}
\label{app:synth}
Before evaluating proxy-guided measurement calibration, we validate that the synthetic datasets conform to the intended causal data-generating process. These checks ensure that (i) the imposed structural assumptions are respected, (ii) the injected bias behaves as designed, and (iii) the latent variables are recoverable up to the expected indeterminacies. Together, these diagnostics establish that performance results in the main experiments reflect properties of the method rather than artifacts of data construction.

Tables~\ref{tab:a_mae_table}--\ref{tab:z_mae_table} and
Figures~\ref{fig:page3_scatter}--\ref{fig:page1_dists} summarize the synthetic
data validation results. The bias latent error in
Table~\ref{tab:a_mae_table} increases with the injected bias strength~$\alpha$,
reflecting that the inferred bias representation is unconstrained and captures
bias magnitude rather than acting as a bounded classifier. In contrast,
Table~\ref{tab:rmse_proxies_table} shows that proxy reconstruction error remains
stable across $\alpha$ and noise models, confirming that proxies depend only on
the content latents and are unaffected by bias. Table~\ref{tab:z_mae_table}
demonstrates consistent recovery of the content latents up to scale and
permutation. These quantitative results are complemented by distributional diagnostics in
Figures~\ref{fig:page1_dists} and~\ref{fig:page3_metrics}, as well as qualitative
latent alignment visualizations in
Figure~\ref{fig:page3_scatter}.

\begin{figure}[!htbp]
    \centering
    \includegraphics[width=\textwidth]{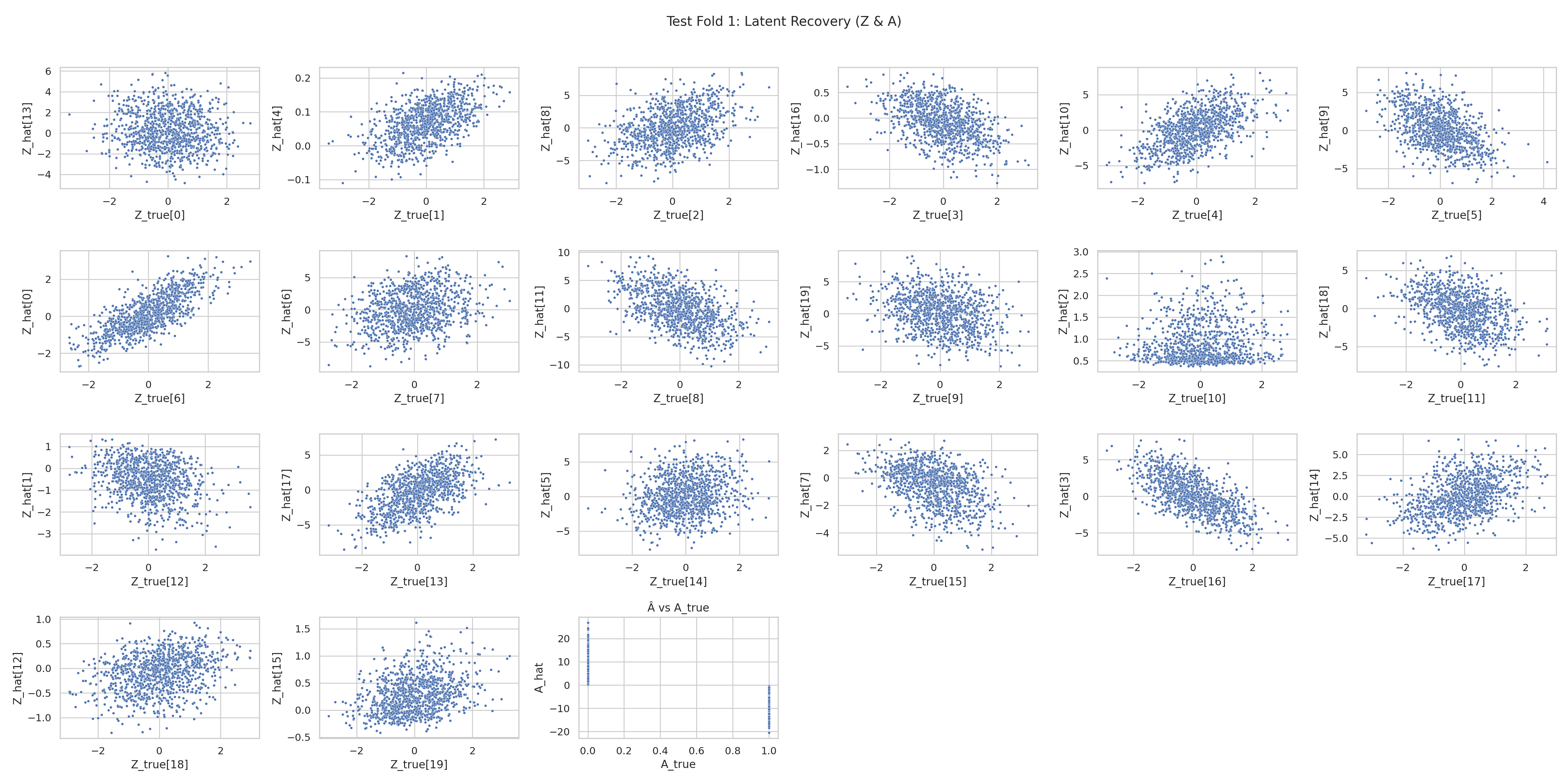}
    \caption{
Scatter plot of the recovered content latents $\hat Z$ versus the true latents $Z$
for a representative fold, after aligning dimensions using the closest permutation that maximizes permuted $R^2$.
    }
    \label{fig:page3_scatter}
\end{figure}

\begin{figure}[t]
    \centering
    \includegraphics[
        width=\linewidth,
        height=0.9\textheight,
        keepaspectratio
    ]{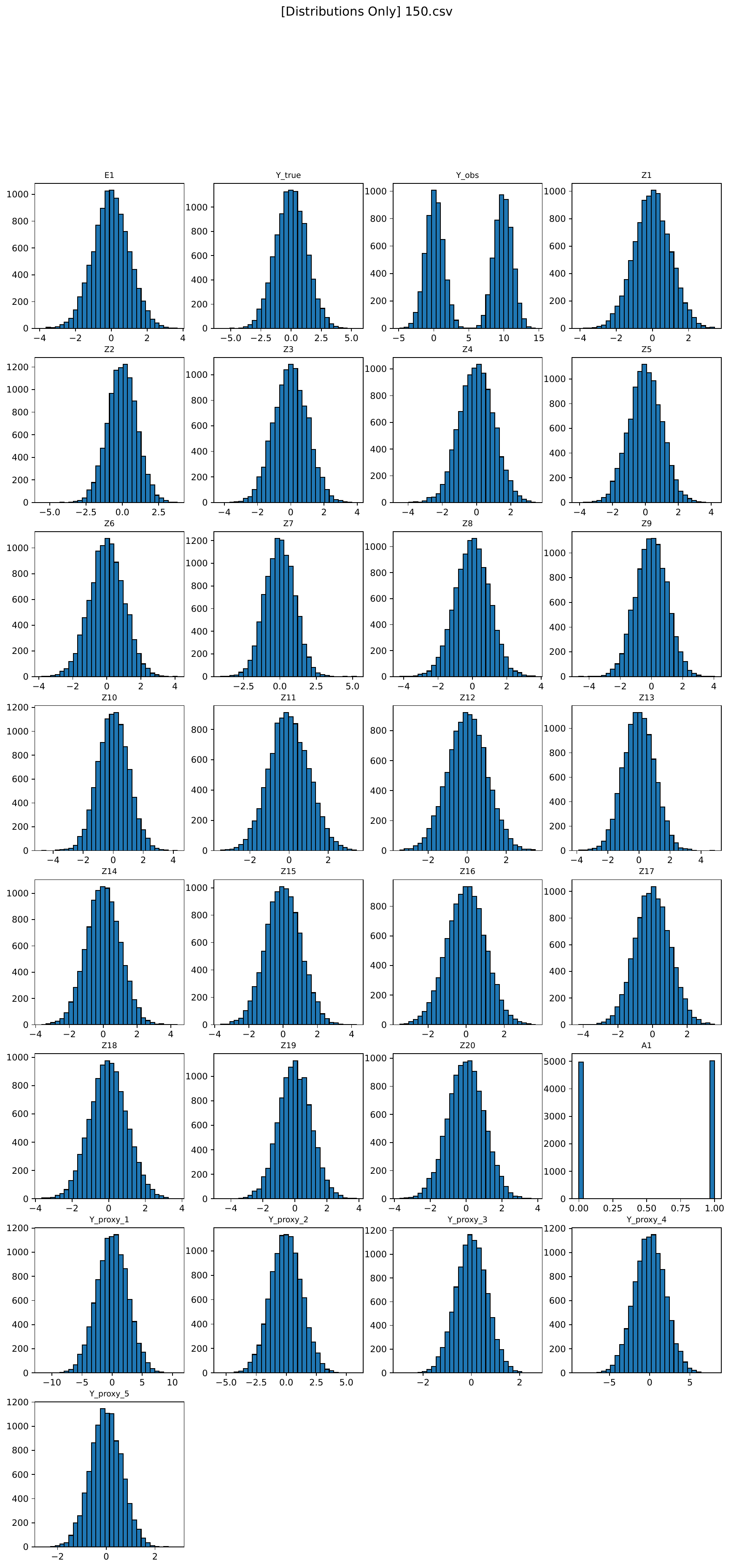}
    \caption{
        \textbf{Sanity Check for the Distribution.}
        Histograms for all key variables: $E_1$, $Z$, $A$, $Y_{\text{true}}$, 
        $Y_{\text{obs}}$, and proxies for a given configuration. Ensures data follows the intended generative model.
    }
    \label{fig:page1_dists}
\end{figure}

\begin{figure}[t]
    \centering
    \includegraphics[width=\textwidth]{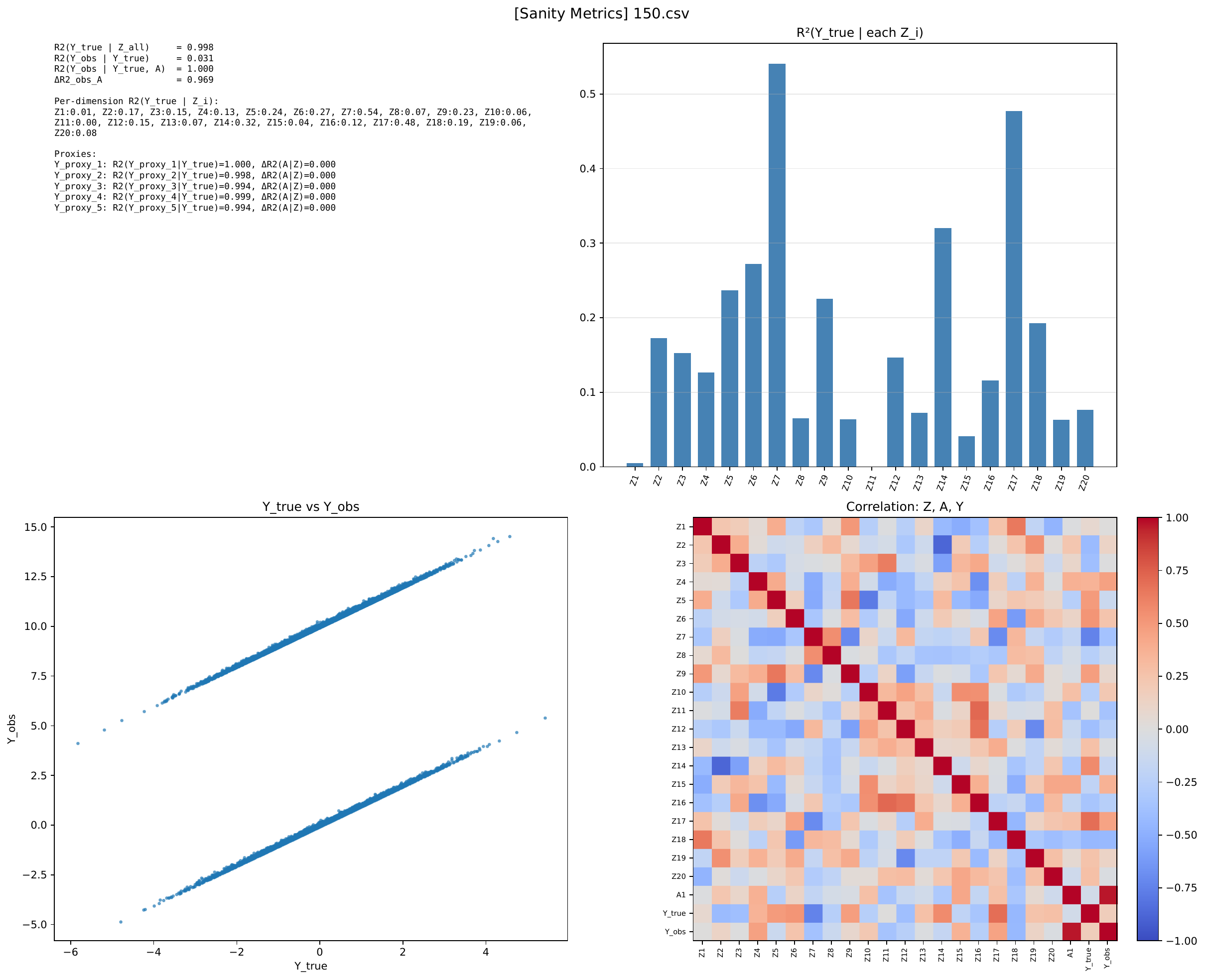}
    \caption{
        \textbf{Sanity Check - Core Metrics and Correlations.}
        This page summarizes the main structural sanity metrics:
        $R^2(Y_{\text{true}} \mid Z)$, $R^2(Y_{\text{obs}} \mid Y_{\text{true}})$,
        the effect of bias latents via $\Delta R^2$, per-dimension $R^2$ for each $Z_i$,
        proxy alignment metrics, as well as the correlation matrix over $\{Z, A, Y\}$.
        These diagnostics confirm that the dataset correctly encodes both latent content
        and latent bias structure.
    }
    \label{fig:page3_metrics}
\end{figure}

\begin{table*}[!htbp]
\centering
\caption{Mean$\pm$Std of $a_{mae}$ grouped by $(n,d_z,d_e)$ rows and $(\alpha,\text{noise})$ columns.}
\label{tab:a_mae_table}
\small
\resizebox{\textwidth}{!}{
\begin{tabular}{ccccccccc}
\toprule
\multicolumn{3}{c}{} & \multicolumn{6}{c}{$a_{mae}$} \\
\cmidrule(lr){4-9}
\multicolumn{3}{c}{} & \multicolumn{2}{c}{$\alpha=1$} & \multicolumn{2}{c}{$\alpha=5$} & \multicolumn{2}{c}{$\alpha=10$} \\
\cmidrule(lr){4-5}\cmidrule(lr){6-7}\cmidrule(lr){8-9}
$n$ & $d_z$ & $d_e$ & Gaussian & Poisson (scaled) & Gaussian & Poisson (scaled) & Gaussian & Poisson (scaled) \\
\midrule
500 & 1 & 10 & 1.29±0.33 & 1.29±0.32 & 1.39±0.47 & 1.38±0.49 & 1.48±0.50 & 1.46±0.51 \\
500 & 2 & 10 & 1.52±0.22 & 1.54±0.21 & 1.63±0.34 & 1.64±0.34 & 1.81±0.39 & 1.85±0.43 \\
500 & 5 & 10 & 1.37±0.27 & 1.29±0.38 & 1.09±0.31 & 1.18±0.28 & 1.19±0.29 & 1.13±0.37 \\
500 & 10 & 10 & 1.68±0.34 & 1.68±0.35 & 1.72±0.47 & 1.82±0.40 & 1.83±0.41 & 1.83±0.44 \\
500 & 20 & 10 & 1.64±0.32 & 1.63±0.22 & 1.57±0.39 & 1.57±0.39 & 1.51±0.61 & 1.66±0.43 \\
\midrule
1000 & 1 & 10 & 1.49±0.18 & 1.49±0.19 & 1.60±0.36 & 1.61±0.35 & 1.83±0.36 & 1.82±0.36 \\
1000 & 2 & 10 & 1.63±0.16 & 1.64±0.16 & 1.79±0.39 & 1.80±0.38 & 2.24±0.46 & 2.25±0.45 \\
1000 & 5 & 10 & 1.63±0.05 & 1.65±0.08 & 1.51±0.53 & 1.39±0.49 & 1.77±0.41 & 1.66±0.53 \\
1000 & 10 & 10 & 1.93±0.35 & 1.75±0.36 & 1.46±0.43 & 2.12±0.44 & 2.10±0.64 & 2.23±0.45 \\
1000 & 20 & 10 & 1.88±0.20 & 1.88±0.19 & 1.61±0.33 & 1.63±0.33 & 1.95±0.45 & 1.96±0.44 \\
\midrule
2500 & 1 & 10 & 1.75±0.29 & 1.77±0.31 & 2.12±0.48 & 2.10±0.47 & 2.59±0.48 & 2.60±0.49 \\
2500 & 2 & 10 & 1.90±0.13 & 1.89±0.14 & 2.73±0.35 & 2.67±0.38 & 3.30±0.41 & 3.22±0.40 \\
2500 & 5 & 10 & 1.70±0.50 & 1.87±0.19 & 2.74±0.55 & 2.18±1.00 & 2.96±0.53 & 2.76±0.96 \\
2500 & 10 & 10 & 2.22±0.37 & 2.21±0.39 & 3.05±0.53 & 3.09±0.46 & 3.24±0.53 & 3.10±0.88 \\
2500 & 20 & 10 & 2.16±0.22 & 2.17±0.20 & 2.07±0.46 & 2.09±0.46 & 3.04±0.58 & 3.03±0.58 \\
\midrule
5000 & 1 & 10 & 1.95±0.26 & 1.91±0.23 & 2.50±0.49 & 2.51±0.51 & 3.66±0.50 & 3.64±0.49 \\
5000 & 2 & 10 & 1.91±0.10 & 1.92±0.10 & 3.74±0.33 & 3.72±0.36 & 4.42±0.42 & 4.42±0.43 \\
5000 & 5 & 10 & 2.29±0.91 & 2.19±0.96 & 3.49±0.43 & 3.61±0.54 & 4.18±0.55 & 4.18±0.53 \\
5000 & 10 & 10 & 2.29±0.60 & 2.22±0.51 & 4.01±0.54 & 4.10±0.56 & 4.41±0.55 & 4.50±0.55 \\
5000 & 20 & 10 & 2.50±0.16 & 2.49±0.16 & 1.89±0.96 & 1.93±1.08 & 4.28±0.57 & 4.27±0.56 \\
\midrule
10000 & 1 & 10 & 1.13±0.17 & 1.14±0.18 & 1.86±0.72 & 1.85±0.71 & 4.23±0.58 & 4.20±0.61 \\
10000 & 2 & 10 & 1.24±0.07 & 1.22±0.06 & 4.07±0.53 & 4.15±0.49 & 5.44±0.31 & 5.40±0.33 \\
10000 & 5 & 10 & 1.58±1.50 & 1.14±0.09 & 3.92±1.13 & 4.17±1.18 & 5.40±0.61 & 4.90±1.38 \\
10000 & 10 & 10 & 1.30±0.35 & 1.41±0.45 & 5.12±0.55 & 5.32±0.78 & 6.46±1.06 & 6.13±0.58 \\
10000 & 20 & 10 & 1.83±0.15 & 1.82±0.15 & 1.02±0.59 & 1.13±0.72 & 5.42±0.52 & 5.41±0.58 \\
\midrule
\bottomrule
\end{tabular}
}
\end{table*}

\begin{table*}[!htbp]
\centering
\caption{Mean$\pm$Std of $rmse_{proxies}$ grouped by $(n,d_z,d_e)$ rows and $(\alpha,\text{noise})$ columns.}
\label{tab:rmse_proxies_table}
\small
\resizebox{\textwidth}{!}{
\begin{tabular}{ccccccccc}
\toprule
\multicolumn{3}{c}{} & \multicolumn{6}{c}{$rmse_{proxies}$} \\
\cmidrule(lr){4-9}
\multicolumn{3}{c}{} & \multicolumn{2}{c}{$\alpha=1$} & \multicolumn{2}{c}{$\alpha=5$} & \multicolumn{2}{c}{$\alpha=10$} \\
\cmidrule(lr){4-5}\cmidrule(lr){6-7}\cmidrule(lr){8-9}
$n$ & $d_z$ & $d_e$ & Gaussian & Poisson (scaled) & Gaussian & Poisson (scaled) & Gaussian & Poisson (scaled) \\
\midrule
500 & 1 & 10 & 0.20±0.03 & 0.20±0.03 & 0.20±0.03 & 0.20±0.03 & 0.20±0.03 & 0.20±0.03 \\
500 & 2 & 10 & 0.21±0.03 & 0.20±0.02 & 0.21±0.03 & 0.20±0.02 & 0.21±0.03 & 0.20±0.02 \\
500 & 5 & 10 & 0.19±0.02 & 0.19±0.02 & 0.19±0.02 & 0.19±0.02 & 0.19±0.02 & 0.19±0.02 \\
500 & 10 & 10 & 0.15±0.02 & 0.15±0.02 & 0.15±0.02 & 0.15±0.02 & 0.15±0.02 & 0.15±0.02 \\
500 & 20 & 10 & 0.26±0.03 & 0.25±0.03 & 0.26±0.03 & 0.25±0.03 & 0.26±0.03 & 0.25±0.03 \\
\midrule
1000 & 1 & 10 & 0.19±0.06 & 0.19±0.09 & 0.19±0.06 & 0.19±0.09 & 0.19±0.06 & 0.19±0.09 \\
1000 & 2 & 10 & 0.18±0.03 & 0.17±0.02 & 0.18±0.03 & 0.17±0.02 & 0.18±0.03 & 0.17±0.02 \\
1000 & 5 & 10 & 0.16±0.02 & 0.17±0.01 & 0.16±0.02 & 0.17±0.01 & 0.16±0.02 & 0.17±0.01 \\
1000 & 10 & 10 & 0.13±0.01 & 0.13±0.01 & 0.17±0.01 & 0.13±0.01 & 0.13±0.01 & 0.13±0.01 \\
1000 & 20 & 10 & 0.20±0.01 & 0.20±0.01 & 0.20±0.01 & 0.20±0.01 & 0.20±0.01 & 0.20±0.01 \\
\midrule
2500 & 1 & 10 & 0.14±0.01 & 0.14±0.01 & 0.14±0.01 & 0.14±0.01 & 0.14±0.01 & 0.14±0.01 \\
2500 & 2 & 10 & 0.14±0.01 & 0.14±0.01 & 0.14±0.01 & 0.15±0.01 & 0.14±0.01 & 0.15±0.01 \\
2500 & 5 & 10 & 0.13±0.01 & 0.14±0.01 & 0.13±0.01 & 0.14±0.01 & 0.14±0.01 & 0.14±0.01 \\
2500 & 10 & 10 & 0.12±0.00 & 0.12±0.00 & 0.12±0.00 & 0.12±0.00 & 0.12±0.01 & 0.12±0.00 \\
2500 & 20 & 10 & 0.17±0.01 & 0.16±0.01 & 0.17±0.01 & 0.17±0.01 & 0.17±0.01 & 0.17±0.01 \\
\midrule
5000 & 1 & 10 & 0.12±0.01 & 0.13±0.01 & 0.12±0.01 & 0.13±0.01 & 0.12±0.01 & 0.13±0.01 \\
5000 & 2 & 10 & 0.13±0.01 & 0.12±0.01 & 0.13±0.01 & 0.12±0.01 & 0.13±0.01 & 0.12±0.01 \\
5000 & 5 & 10 & 0.12±0.00 & 0.12±0.01 & 0.12±0.00 & 0.12±0.01 & 0.12±0.00 & 0.12±0.01 \\
5000 & 10 & 10 & 0.11±0.00 & 0.11±0.00 & 0.11±0.00 & 0.11±0.00 & 0.11±0.00 & 0.11±0.00 \\
5000 & 20 & 10 & 0.14±0.01 & 0.14±0.01 & 0.14±0.01 & 0.14±0.01 & 0.14±0.01 & 0.14±0.01 \\
\midrule
10000 & 1 & 10 & 0.11±0.01 & 0.11±0.01 & 0.11±0.01 & 0.11±0.01 & 0.11±0.01 & 0.11±0.01 \\
10000 & 2 & 10 & 0.11±0.01 & 0.11±0.01 & 0.11±0.01 & 0.11±0.01 & 0.11±0.01 & 0.11±0.01 \\
10000 & 5 & 10 & 0.10±0.00 & 0.10±0.00 & 0.10±0.00 & 0.10±0.00 & 0.10±0.00 & 0.10±0.00 \\
10000 & 10 & 10 & 0.09±0.00 & 0.09±0.00 & 0.09±0.00 & 0.09±0.00 & 0.09±0.00 & 0.09±0.00 \\
10000 & 20 & 10 & 0.11±0.00 & 0.11±0.00 & 0.11±0.00 & 0.11±0.00 & 0.11±0.00 & 0.11±0.00 \\
\midrule
\bottomrule
\end{tabular}
}
\end{table*}

\begin{table*}[!htbp]
\centering
\caption{Mean$\pm$Std of $z_{mae}$ grouped by $(n,d_z,d_e)$ rows and $(\alpha,\text{noise})$ columns.}
\label{tab:z_mae_table}
\small
\resizebox{\textwidth}{!}{
\begin{tabular}{ccccccccc}
\toprule
\multicolumn{3}{c}{} & \multicolumn{6}{c}{$z_{mae}$} \\
\cmidrule(lr){4-9}
\multicolumn{3}{c}{} & \multicolumn{2}{c}{$\alpha=1$} & \multicolumn{2}{c}{$\alpha=5$} & \multicolumn{2}{c}{$\alpha=10$} \\
\cmidrule(lr){4-5}\cmidrule(lr){6-7}\cmidrule(lr){8-9}
$n$ & $d_z$ & $d_e$ & Gaussian & Poisson (scaled) & Gaussian & Poisson (scaled) & Gaussian & Poisson (scaled) \\
\midrule
500 & 1 & 10 & 1.84±0.86 & 1.82±0.86 & 1.84±0.86 & 1.82±0.86 & 1.84±0.86 & 1.82±0.86 \\
500 & 2 & 10 & 1.42±0.41 & 1.50±0.37 & 1.42±0.41 & 1.50±0.37 & 1.42±0.41 & 1.50±0.37 \\
500 & 5 & 10 & 1.22±0.19 & 1.23±0.19 & 1.22±0.19 & 1.23±0.19 & 1.22±0.19 & 1.23±0.19 \\
500 & 10 & 10 & 1.14±0.06 & 1.14±0.07 & 1.14±0.06 & 1.14±0.07 & 1.14±0.06 & 1.14±0.07 \\
500 & 20 & 10 & 1.09±0.10 & 1.10±0.11 & 1.09±0.10 & 1.10±0.11 & 1.09±0.10 & 1.10±0.11 \\
\midrule
1000 & 1 & 10 & 2.07±0.92 & 2.08±0.94 & 2.07±0.92 & 2.08±0.94 & 2.07±0.92 & 2.08±0.94 \\
1000 & 2 & 10 & 1.59±0.48 & 1.59±0.48 & 1.59±0.48 & 1.59±0.48 & 1.59±0.48 & 1.59±0.48 \\
1000 & 5 & 10 & 1.29±0.20 & 1.29±0.21 & 1.29±0.20 & 1.29±0.21 & 1.29±0.20 & 1.29±0.21 \\
1000 & 10 & 10 & 1.22±0.08 & 1.22±0.08 & 1.12±0.06 & 1.22±0.08 & 1.22±0.08 & 1.22±0.08 \\
1000 & 20 & 10 & 1.18±0.10 & 1.17±0.11 & 1.18±0.10 & 1.17±0.11 & 1.18±0.10 & 1.17±0.11 \\
\midrule
2500 & 1 & 10 & 1.94±0.95 & 1.95±0.93 & 1.94±0.95 & 1.95±0.93 & 1.94±0.95 & 1.95±0.93 \\
2500 & 2 & 10 & 1.66±0.41 & 1.64±0.37 & 1.66±0.41 & 1.64±0.38 & 1.67±0.41 & 1.64±0.38 \\
2500 & 5 & 10 & 1.40±0.20 & 1.40±0.21 & 1.40±0.20 & 1.40±0.21 & 1.40±0.21 & 1.40±0.21 \\
2500 & 10 & 10 & 1.32±0.06 & 1.32±0.06 & 1.32±0.06 & 1.32±0.06 & 1.32±0.06 & 1.32±0.06 \\
2500 & 20 & 10 & 1.28±0.09 & 1.28±0.09 & 1.28±0.09 & 1.28±0.09 & 1.27±0.09 & 1.28±0.09 \\
\midrule
5000 & 1 & 10 & 2.24±0.82 & 2.23±0.83 & 2.24±0.82 & 2.23±0.83 & 2.24±0.82 & 2.23±0.83 \\
5000 & 2 & 10 & 1.75±0.48 & 1.75±0.48 & 1.75±0.48 & 1.75±0.48 & 1.75±0.48 & 1.75±0.48 \\
5000 & 5 & 10 & 1.49±0.23 & 1.49±0.23 & 1.49±0.23 & 1.49±0.23 & 1.49±0.23 & 1.49±0.23 \\
5000 & 10 & 10 & 1.44±0.16 & 1.44±0.16 & 1.44±0.16 & 1.44±0.16 & 1.44±0.16 & 1.44±0.16 \\
5000 & 20 & 10 & 1.46±0.10 & 1.46±0.09 & 1.46±0.10 & 1.46±0.09 & 1.46±0.10 & 1.46±0.09 \\
\midrule
10000 & 1 & 10 & 2.20±0.86 & 2.21±0.84 & 2.20±0.86 & 2.21±0.84 & 2.20±0.86 & 2.21±0.84 \\
10000 & 2 & 10 & 1.77±0.34 & 1.76±0.35 & 1.77±0.34 & 1.76±0.35 & 1.77±0.34 & 1.76±0.35 \\
10000 & 5 & 10 & 1.57±0.32 & 1.56±0.33 & 1.57±0.32 & 1.56±0.33 & 1.57±0.32 & 1.56±0.33 \\
10000 & 10 & 10 & 1.48±0.16 & 1.47±0.15 & 1.48±0.16 & 1.47±0.15 & 1.48±0.16 & 1.47±0.15 \\
10000 & 20 & 10 & 1.66±0.14 & 1.67±0.13 & 1.66±0.14 & 1.67±0.13 & 1.66±0.14 & 1.67±0.13 \\
\midrule
\bottomrule
\end{tabular}
}
\end{table*}

\clearpage

\subsection{Experiments on Semi-Synthetic Datasets}
\label{app:semi}
To evaluate proxy-guided calibration in settings that more closely resemble real-world predictive
tasks, we construct semi-synthetic datasets from two randomized controlled trials: the JOBS job
training experiment and the Oregon Health Insurance Experiment (OHIE). In both datasets, the
environment variables and proxy measurements arise naturally from high-quality experimental data,
while a controlled amount of reporting bias is injected synthetically. This design preserves the
advantages of real-world covariate structure and proxy behavior, while providing access to known
ground-truth bias levels for benchmarking.

\paragraph{JOBS (Employment Training).}
The JOBS dataset originates from the National Supported Work (NSW) program, later merged with the
PSID comparison sample to create a rich set of demographic and economic covariates.
Following the preprocessing in our pipeline, the outcome is 1978 earnings (log-transformed), and
the proxies consist of pretreatment earnings from 1974 and 1975, which are unaffected by reporting
bias and thus serve as clean measurements of underlying economic status.
Environment variables include treatment assignment and demographic attributes such as age,
education, race, marital status, and prior earnings. After preprocessing, the dataset contains a
fully numeric table with the structure shown in Table~\ref{tab:jobs_schema}. Bias is injected by
adding $\alpha A$ to the standardized outcome, where $A$ is a synthetically generated latent bias
variable derived only from environment covariates.

\paragraph{OHIE (Medicaid Lottery).}
The OHIE dataset is derived from the Oregon Medicaid expansion via a lottery-based allocation
mechanism. Treatment is the randomized selection to receive Medicaid coverage. The outcome is
log-transformed emergency department spending. Proxies represent biomedical and mental-health
measurements obtained during in-person assessments (blood pressure, HDL, A1C, and depression score),
all of which serve as clinical proxies for underlying health status. Environment covariates include
variables from the pre-randomization descriptive survey, state-level program participation, prior
emergency department visits, and selected in-person clinical measures. The preprocessing pipeline
filters environment variables by variance and removes collinear features, producing the unified
schema shown in Table~\ref{tab:ohie_schema}. As with JOBS, bias is introduced by adding
$\alpha A$ to the standardized observed outcome, where $A$ depends only on selected environment
variables.

Together, JOBS and OHIE provide complementary settings: an employment-focused economic context
with proxies reflecting income history, and a healthcare context with proxies reflecting physiological
health markers. These semi-synthetic datasets enable controlled evaluation of measurement error
methods while retaining realistic feature distributions, proxy relationships, and treatment assignment
from real experimental studies.

\begin{table*}[t]
\centering
\caption{Schema of the OHIE semi-synthetic dataset after preprocessing. Environment variables $E$ include treatment, state-program participation, prior emergency-department utilization, and in-person clinical measures. Proxies are clean biomedical measurements and the outcome is log-transformed ED spending.}
\label{tab:ohie_schema}
\small
\begin{tabular}{p{0.18\textwidth} p{0.25\textwidth} p{0.52\textwidth}}
\toprule
\textbf{Variable Type} & \textbf{Name(s)} & \textbf{Description} \\
\midrule

\textbf{Outcome} &
$Y_{\mathrm{obs}}$ &
Log-transformed emergency department spending ($0$--$11.13$). \\

\midrule
\textbf{Proxies} &
$Y_{\mathrm{proxy},1}$--$Y_{\mathrm{proxy},5}$ &
Biomedical measurements: systolic BP, diastolic BP, HDL, A1C, PHQ depression score. \\

\midrule
\textbf{Environment $E$: Treatment} &
$E_1$ (T) &
Lottery selection indicator (0/1). \\

\midrule
\textbf{Environment $E$: State program participation} &
\begin{minipage}{0.25\textwidth}
\raggedright
$E_2$: SNAP household (prenotify) \\
$E_3$: TANF household (prenotify) \\
\end{minipage}
&
SNAP/TANF participation prior to randomization
(large variation: 286--2952 unique values). \\

\midrule
\textbf{Environment $E$: Prior ED utilization} &
\begin{minipage}{0.25\textwidth}
\raggedright
$E_4$: num\_visit\_pre\_cens\_ed \\
$E_5$: num\_on\_pre\_cens\_ed \\
$E_6$: num\_off\_pre\_cens\_ed \\
$E_7$: num\_chron\_pre\_cens\_ed \\
$E_8$: num\_inj\_pre\_cens\_ed \\
$E_9$: num\_hiun\_pre\_cens\_ed \\
$E_{10}$: num\_loun\_pre\_cens\_ed \\
$E_{11}$: ed\_charg\_tot\_pre\_ed \\
$E_{12}$: charg\_tot\_pre\_ed \\
\end{minipage}
&
Utilization counts and total pre-period ED charges. Reflects burden of chronic,
injury-related, heart, head, and psychiatric emergency visits. Many exhibit wide 
ranges (e.g., \$0--\$180{,}054 in total charges). \\

\midrule
\textbf{Environment $E$: In-person clinical measures} &
\begin{minipage}{0.25\textwidth}
\raggedright
$E_{13}$: tot\_med\_spend\_other\_inp \\
\end{minipage}
&
Supplementary in-person medical expenditure measure (0--92{,}475 across subjects). \\

\midrule
\textbf{Bias latent} &
$A$ &
Synthetic binary latent generated from a logistic function of all environment variables. \\

\bottomrule
\end{tabular}
\end{table*}

\begin{table*}[t]
\centering
\caption{Schema of the JOBS semi-synthetic dataset after preprocessing. Environment variables consist of treatment and demographic covariates; proxies are clean pretreatment earnings; the outcome is log-transformed 1978 earnings.}
\label{tab:jobs_schema}
\small
\begin{tabular}{p{0.18\textwidth} p{0.25\textwidth} p{0.52\textwidth}}
\toprule
\textbf{Variable Type} & \textbf{Name(s)} & \textbf{Description} \\
\midrule

\textbf{Outcome} &
$Y_{\mathrm{obs}}$ &
Log-transformed 1978 earnings ($\log(1+\mathrm{re78})$), range $0$--$11.01$, with 457 unique values. \\

\midrule
\textbf{Proxies} &
$Y_{\mathrm{proxy},1}$, $Y_{\mathrm{proxy},2}$ &
Clean pretreatment earnings:
\begin{itemize}\setlength{\itemsep}{1pt}
    \item $Y_{\mathrm{proxy},1}$ (re74): range $0$--$35{,}040$, 358 unique values
    \item $Y_{\mathrm{proxy},2}$ (re75): range $0$--$25{,}142$, 356 unique values
\end{itemize}
These proxies are unaffected by the injected measurement bias. \\

\midrule
\textbf{Environment $E$: Treatment} &
$E_1$ (treat) &
Random assignment to the job-training intervention (binary 0/1). \\

\midrule
\textbf{Environment $E$: Demographics} &
\begin{minipage}{0.25\textwidth}
\raggedright
$E_2$: age (16--55; 40 unique values) \\
$E_3$: educ (0--18; 19 unique values) \\
$E_4$: black (0/1) \\
$E_5$: hispan (0/1) \\
$E_6$: married (0/1) \\
$E_7$: nodegree (0/1) \\
\end{minipage}
&
Baseline demographic variables from the NSW/JOBS dataset, representing socioeconomic and educational background. \\

\midrule
\textbf{Bias latent} &
$A$ &
Synthetic binary latent generated from a logistic function of the environment variables, used to inject controlled bias into the observed outcome. \\

\bottomrule
\end{tabular}
\end{table*}

\clearpage

\subsection{More Details on Real-world Case Study: SHELDUS}
\label{app:real}
The Spatial Hazard Events and Losses Database for the United States (SHELDUS) provides
county-level records of direct economic losses attributed to natural hazards, including
thunderstorms, floods, hurricanes, wildfires, and winter storms. Each entry aggregates
property damage, crop damage, and casualty counts reported by local agencies, making the
dataset a rich source for studying disaster impacts but also one in which measurement
error is well documented. Loss estimates often depend on heterogeneous reporting practices,
varying administrative capacity across counties, and inconsistent assessment procedures,
all of which introduce systematic biases into the observed outcomes. To support our
calibration task, we augment SHELDUS with environment variables capturing socioeconomic
characteristics (e.g., income, population, educational attainment), hazard-type indicators,
and exposure features, while also incorporating proxy variables such as remote-sensing
damage indicators or alternative loss metrics when available. This setting provides a
realistic testbed in which the true underlying losses are unobserved, reporting bias is
substantial, and proxy-guided calibration has the potential to correct distortions in
disaster loss estimation across heterogeneous counties.

\begin{table}[!htbp]
\centering
\caption{Variables used in the SHELDUS-based real-world case study after preprocessing.}
\label{tab:sheldus_variables}
\small
\begin{tabular}{p{3.4cm} p{9.0cm}}
\toprule
\textbf{Category} & \textbf{Variables and description} \\
\midrule
Identifiers &
FIPS (county identifier), Year, Month \\

Hazard indicators &
Hazard type (Flooding, Hurricane/Tropical Storm, Tornado, Wildfire), one-hot encoded as
$E_{\text{hazard},k}$ \\

Observed outcome &
$Y_{\mathrm{obs}}$: reported county-level economic loss (property damage, inflation-adjusted to 2023 USD),
log-transformed and standardized \\

Proxy variables &
$Y_{\mathrm{proxy},1}$: fraction of land transitioning from bare ground to water \\
& $Y_{\mathrm{proxy},2}$: fraction of built area transitioning to damage-related classes \\
& $Y_{\mathrm{proxy},3}$: fraction of built area transitioning to water \\
& $Y_{\mathrm{proxy},4}$: fraction of vegetation transitioning to damage-related classes \\
& $Y_{\mathrm{proxy},5}$: fraction of vegetation transitioning to water \\
& (all proxies log-scaled with factor $10^6$ and standardized) \\

Socioeconomic environment ($E$) &
Population, median age, median household income, per-capita income, median home value,
median rent, poverty count and rate, education counts (high school, bachelor’s) and percentages,
education skew \\

Housing and labor characteristics &
Housing units, household count, housing unit density, household size,
labor force size, employment and unemployment counts and rates,
labor force participation rate \\

\bottomrule
\end{tabular}
\end{table}

\clearpage

\section{Remarks on Identifiability}
\label{app:proofs}
\label{app:identifiability}

Our approach combines latent-variable modeling with causal adjustment to estimate
bias-adjusted outcomes.
This raises two distinct, but often conflated questions of identifiability:
(i) identifiability of the latent representations learned by variational autoencoders,
and (ii) identifiability of the causal estimand of interest once an appropriate
representation is available.
We emphasize that our goal is the latter; latent recovery is required only insofar
as it suffices to identify the bias estimand.

\paragraph{Latent identifiability with VAEs.}
It is well known that latent variables in deep generative models are not identifiable
without additional structure.
Recent work formalizes this limitation and provides a hierarchy of identifiability
results under increasingly strong assumptions on the prior and decoder.
In particular, under mixture priors over latents and piecewise affine decoders
(such as ReLU networks), latent variables are identifiable up to an invertible affine
transformation.
This is the weakest but most general identifiability guarantee in this hierarchy
(Table~1 in (\cite{kivva2022identifiability}), and it applies directly to standard VAE architectures.

Importantly, this form of identifiability does not imply semantic disentanglement
or recovery of a canonical coordinate system for the latent variables.
Rather, it guarantees that the learned representation is equivalent to the true
latent variable up to affine reparameterization.
Throughout this work, we explicitly adopt this weakest guarantee and do not assume
any stronger notion of latent identifiability.

\paragraph{Why affine identifiability is sufficient for our setting.}
Our method uses the learned latent $Z$ exclusively as a \emph{content or adjustment
representation}.
All downstream operations: nearest-neighbor matching in latent space, conditioning
in regression, and computation of absolute conditional average treatment effects are
invariant under affine transformations of $Z$.
Consequently, the affine identifiability result is sufficient for
our purposes: the latent representation need not be uniquely defined, only stable
up to transformations that preserve relative geometry (\cite{kivva2022identifiability}).

This distinction is crucial.
We do not require recovery of the true causal variables in a structural sense, nor
do we require disentanglement of all generative factors.
Instead, we require that the representation preserve the information necessary to
block spurious associations between the bias variable and the observed outcome.

\paragraph{A measurement-model perspective on learned representations.}
A complementary view is provided by the measurement-model framework for causal
representation learning.
From this perspective, the learned latent $\widehat Z$ is interpreted as a
measurement variable generated from the true (unobserved) content variable $Z$
via an unknown measurement function.
A representation is said to be \emph{causally valid} for a downstream estimand if it
can be used as a drop-in replacement without altering the estimand’s value
(\cite{yao2025third}).

Crucially, standard adjustment and matching estimands are invariant to invertible
reparameterizations of adjustment variables.
If $\widehat Z = h(Z)$ for a bijective (or affine) function $h$, then backdoor-style
functionals expressed in terms of $Z$ can equivalently be expressed in terms of
$\widehat Z$.
By contrast, when the treatment or outcome itself is only identified up to an
unknown reparameterization, average treatment effects are generally not identified
without additional scale-setting information.
In our setting, $Z$ is used strictly as an adjustment representation, while
$Y_{\mathrm{obs}}$ is directly observed, placing us in the invariant regime.

\paragraph{Relation to identifiability results in causal representation learning.}
Recent work on causal representation learning studies conditions under which latent
causal variables can be uniquely recovered from observations (\cite{von2023nonparametric,vargici2025score}).
These results typically rely on strong assumptions, such as access to multiple
interventional environments, independent causal mechanisms, non-Gaussian noise,
temporal ordering, or known graph constraints.
Such assumptions are appropriate when the goal is recovery of the latent causal
variables themselves.

Our objective is intentionally weaker.
Rather than identifying the full set of causal variables or the latent causal graph,
we seek identification of a specific bias-adjusted estimand.
Recovery of the true causal variables is sufficient but not necessary for this goal.
Accordingly, we do not invoke the stronger assumptions required by causal
representation learning identifiability theorems.
Instead, we leverage weaker but more broadly applicable guarantees, latent recovery
up to affine equivalence, combined with causal adjustment arguments that are invariant
to such reparameterizations.

\paragraph{Implications for bias estimation.}
Under the assumed causal structure, conditioning on the content representation $Z$
and observed covariates blocks all backdoor paths between the bias variable and the
observed outcome.
Once such a representation is available, the bias-adjusted estimand is identified
via standard adjustment.
The role of the latent-variable model is therefore not to discover the true causal
variables, but to construct a representation sufficient to satisfy the identifying
conditions of the causal estimand.

Taken together, these arguments clarify the scope of our identifiability claims.
We rely on the weakest form of latent identifiability guaranteed by modern VAE theory,
and we show that this is sufficient when combined with causal invariance
considerations to identify the bias effects of interest in real-world data.

\section{Model Architecture and Training Details}
\label{app:model}
This appendix provides additional details on the representation-learning model used in our
proxy-guided calibration framework. The model consists of two coordinated variational
autoencoders, a proxy VAE (\textsc{ZVAE}) that learns latent content factors and a bias VAE
(\textsc{AVAE}) that captures latent reporting bias affecting the observed outcome. The
architecture, objectives, and training protocol described below apply uniformly across synthetic,
semi-synthetic, and real-world datasets.

\subsection{Two-Stage Latent Variable Architecture}

\paragraph{Proxy VAE (\textsc{ZVAE}).}
The first stage models the relationship between proxies $\{Y_{\text{proxy},k}\}_{k=1}^K$ and the
latent content variable $Z$. The encoder receives the proxy vector and (optionally) environment
covariates $E$, producing mean and variance parameters for a Gaussian posterior
$q_\phi(Z \mid Y_{\text{proxy}}, E)$. The decoder reconstructs the proxies independently via a
diagonal Gaussian likelihood. The \textsc{ZVAE} thereby isolates a low-dimensional content
representation that is predictive of both $Y_{\text{true}}$ (in synthetic and semi-synthetic setups)
and the systematic components of the proxies in real data. All encoders and decoders are
two-layer MLPs with ReLU activations, batch normalization, and hidden dimension 256.

\paragraph{Bias VAE (\textsc{AVAE}).}
The second stage models the observed outcome $Y_{\text{obs}}$ as depending on both the content
latent $Z$ and an additional bias latent $A$. The \textsc{AVAE} encoder takes
$(Y_{\text{obs}}, Z, E)$ as input and infers a posterior distribution $q_\psi(A \mid Y_{\text{obs}}, Z, E)$.
The decoder reconstructs $Y_{\text{obs}}$ given $(Z, A)$ through a Gaussian likelihood with learned
variance. This stage forces $A$ to capture variation in $Y_{\text{obs}}$ not explained by $Z$ alone,
thus corresponding to systematic measurement error or reporting bias. The architecture
matches that of \textsc{ZVAE}, using two-layer MLPs with hidden dimension 256.

\subsection{Training Objective}

Each VAE is trained with a standard evidence lower bound (ELBO):
\[
\mathcal{L}_{\text{VAE}} = 
\mathbb{E}_{q(z)}[-\log p(x \mid z)]
\;+\;
\beta \, D_{\mathrm{KL}}(q(z) \,\|\, p(z)),
\]
where $\beta$ is a tunable KL weight.
For \textsc{ZVAE}, $x$ corresponds to the proxy vector; for \textsc{AVAE}, it corresponds to
$Y_{\text{obs}}$. In all experiments we use a spherical standard normal prior for $Z$ and $A$.
To ensure numerical stability across datasets with different scales, all features except $Y_{\text{obs}}$
are standardized; $Y_{\text{obs}}$ is optionally log-transformed according to dataset rules.

\subsection{Training and Evaluation Protocol}

All experiments are conducted using $k$-fold cross-validation with $k=10$. Models are
reinitialized and trained independently on each fold. During training, reconstruction
losses, latent recovery metrics, and diagnostic curves are logged for each fold. Evaluation is performed on held-out splits using the following metrics:
\begin{itemize}
    \item Permuted $R^2$ for reconstruction of $Z$ and $A$,
    \item RMSE for proxy reconstruction and $Y_{\text{obs}}$ reconstruction,
    \item Absolute error in the estimated bias parameter $\alpha$ (synthetic and semi-synthetic),
\end{itemize}
Within each fold, model selection is based on minimizing validation reconstruction loss.

Models are trained using the Adam optimizer with learning rate $10^{-3}$ and batch size 512.
Training is performed for a fixed number of epochs (50 for synthetic datasets and up to 100
for real-world datasets), without early stopping. The variational objective is optimized via
the standard ELBO with a fixed KL weight, which was sufficient to prevent latent collapse in
all settings considered due to the low-dimensional latent structure and proxy-based
supervision.





\subsection{Implementation Details}

All experiments use PyTorch models implemented with modular encoder/decoder components.
Tabular datasets follow a unified interface that partitions covariates into
$(E, Y_{\text{proxy}}, Y_{\text{obs}})$ and standardizes them according to dataset-specific rules.
The training pipeline handles fold splits, metric computation, checkpointing, ELBO logging,
and PDF report generation. Figures in the main text and appendix are produced using seaborn
and matplotlib, and performance summaries are exported as both CSV and \LaTeX{} tables.

\section{Generative AI Disclosure}
\label{app:disclosure}
Portions of this manuscript were prepared with the assistance of generative AI tools (e.g., OpenAI’s ChatGPT) for tasks such as coding, figure generation, and help with drafting. All AI-assisted outputs were carefully reviewed and validated by the authors. The conceptual framework, experimental design, and scientific conclusions are solely the responsibility of the authors.

\end{document}